\theoremstyle{plain}
\newtheorem{theorem}{Theorem}[section]
\newtheorem{lemma}[theorem]{Lemma}
\newtheorem{corollary}[theorem]{Corollary}
\newtheorem{definition}[theorem]{Definition}
\newtheorem{remark}[theorem]{Remark}
\newtheorem{example}[theorem]{Example}
\newtheorem{claim}[theorem]{Claim}
\newcommand\numberthis{\addtocounter{equation}{1}\tag{\theequation}}
\DeclarePairedDelimiter{\set}{\{}{\}}
\newcommand{\R}{\mathbb{R}}
\newcommand{\N}{\mathbb{N}}
\newcommand{\E}{\mathop{\mathbb{E}}}
\newcommand{\X}{\mathcal{X}}
\newcommand{\Y}{\mathcal{Y}}
\newcommand{\AAA}{\mathcal{A}}
\newcommand{\mean}[1]{\mathop{\mathbb{E}}_{#1}}
\newcommand{\alg}{\mathcal{A}}
\newcommand{\sample}{S}
\newcommand{\measure}{\mu}
\newcommand{\symmix}[2]{\gamma_{#1}^{#2}}
\newcommand{\abs}[1]{\left|#1\right|}
\newcommand{\norm}[1]{\left\lVert#1\right\rVert}
\newcommand{\bigO}[1]{\mathcal{O}\left(#1\right)}
\newcommand{\tildeO}[1]{\widetilde{\mathcal{O}}\left(#1\right)}
\newcommand{\tv}[2]{\left\lVert#1 - #2\right\rVert_{\textup{\tiny\textsf{TV}}}}
\newcommand{\tvleft}{\Vert}
\newcommand{\tvright}{{\Vert}_{\textup{\tiny\textsf{TV}}}}
\newcommand{\analyst}{\mathcal{A}}
\newcommand{\game}{AG}
\newcommand{\uf}[1]{\frac{1}{#1}}
\newcommand{\adversary}{\mathbb{A}}
\newcommand{\mechanism}{\mathbb{M}}
\newcommand{\mixing}{\psi}
\newcommand{\uniform}[1]{U\left(#1\right)}
\newcommand{\bernulli}[1]{{\rm Ber}\left(#1\right)}
\newcommand{\seed}{x^*}
\newcommand{\histo}{\mathcal{H}}
\newcommand{\indicator}[1]{\mathbbm{1}\left[#1\right]}
\newcommand{\potential}{g}
\newcommand{\remove}[1]{}
\newcommand{\TSremove}[1]{}
\begin{document}
\title{Adaptive Data Analysis with Correlated Observations}
\author{
Aryeh Kontorovich\thanks{Ben-Gurion University.}
\and
Menachem Sadigurschi\thanks{Ben-Gurion University. Partially supported by the Israel Science Foundation (grant 1871/19).}
\and
Uri Stemmer\thanks{Tel Aviv University and Google Research. Partially supported by the Israel Science Foundation (grant 1871/19)
and by Len Blavatnik and the Blavatnik Family foundation.}
}

\date{January 21, 2022}
\maketitle

\begin{abstract}
The vast majority of the work on adaptive data analysis focuses on the case where the samples in the dataset are independent. Several approaches and tools have been successfully applied in this context, such as {\em differential privacy}, {\em max-information}, {\em compression arguments}, and more. The situation is far less well-understood without the independence assumption. 

We embark on a systematic study of the possibilities of adaptive data analysis with correlated observations. First, we show that, in some cases, differential privacy guarantees generalization even when there are dependencies within the sample, which we quantify using a notion we call {\em Gibbs-dependence}. We complement this result with a tight negative example.
Second, we show that the connection between transcript-compression and adaptive data analysis can be extended to the non-iid setting.

\end{abstract}

\section{Introduction}\label{sec:introduction}

Statistical validity is a well known crucial aspect of modern science.
In the past several years, the natural science and social science communities
have come to realize that such validity was not in fact preserved
in numerous peer-reviewed and widely cited studies, leading to many false discoveries. Known as the {\em replication crisis}, 
this phenomenon threatens to undermine the very basis for the public's trust in science.

One of the main explanations for the prevalence of false discovery arises from the inherent {\em adaptivity} in the process of data
analysis. To illustrate this issue, consider a data analyst interested
in testing a specific research hypothesis. The analyst acquires relevant data, evaluates
the hypothesis, and (say) learns that it is false. Based on the findings, the analyst now
decides on a second hypothesis to be tested, and evaluates it on the {\em same data} (acquiring
fresh data might be too expensive or even impossible). That is, the analyst chooses the
hypotheses {\em adaptively}, where this choice depends on previous interactions with the data.
As a result, the findings are no longer supported by classical statistical theory, which
assumes that the tested hypotheses are fixed before the data is gathered, and the analyst runs the risk of overfitting to the data.

Before presenting our new results, we make the setting explicit. We give here the formulation presented by \citet{dwork2015preserving}. We consider a two-player game between a mechanism $\mechanism$ and an adversary $\adversary$, defined as follows (see Section~\ref{sec:preliminaries} for precise definitions).
\begin{enumerate}[topsep=0pt,itemsep=0pt,partopsep=0ex,parsep=2pt]
    \item The adversary $\adversary$ fixes a measure $\measure$ over $\X^n$ (satisfying some conditions).
    \item The mechanism $\mechanism$ obtains a sample $S\sim\measure$ containing $n$ (possibly correlated) observations.
    \item For $k$ rounds $j=1,2,\dots,k$:
    \begin{itemize}
        \item The adversary chooses a {\em query} $h_j:\X\rightarrow\{0,1\}$, possibly as a function of all previous answers given by the mechanism.
        \item The mechanism obtains $h_j$ and responds with an answer $z_j\in\R$, which is given to $\adversary$.
    \end{itemize}
\end{enumerate}

We say that $\mechanism$ is {\em $(\alpha,\beta)$-empirically-accurate} if with probability at least $1-\beta$ for every $j$ it holds that $|z_j-h_j(S)|\leq\alpha$, where $h_j(S)=\frac{1}{n}\sum_{x\in S}h_j(x)$ is the empirical average of $h_j$ on the sample $S$. We say that $\mechanism$ is {\em $(\alpha,\beta)$-statistically-accurate} if with probability at least $1-\beta$ for every $j$ it holds that $|z_j-h_j(\mu)|\leq\alpha$, where $h_j(\mu)=\E_{T\sim \mu}\left[h_j(T)\right]=\E_{T\sim \mu}\left[\frac{1}{n}\sum_{x\in T}h_j(x)\right]$ is the ``true'' value of the query $h_j$ on the underlying distribution $\mu$. Our goal is to design mechanisms $\mechanism$ providing statistical-accuracy.

Starting from \citep{dwork2015reusable,dwork2015generalization}, it has been demonstrated that  
various
notions of {\em algorithmic stability},
and in particular 
{\em differential privacy (DP)}~\citep{dwork2006calibrating},
allow for methods which maintain statistical validity under the adaptive setting.
The vast majority of the works in this vein, however,  strongly rely on the assumption that the data is sampled in an i.i.d.\ fashion.
This scenario excludes some natural and essential problems in learning theory
such as Markov chains, active learning, and autoregressive models
\citep{
kontram06,jap/1421763330,
kontorovich2017concentration,
settles2009active,hanneke2014theory,
10.1162/00335530151144131}.

A notable exception is a stability notion introduced by \citet{DBLP:journals/corr/BassilyF16}, called {\em typical-stability}.
This beautiful and natural notion has the advantage that, under some conditions on the underlying distribution, it can guarantee statistical validity even for non-i.i.d.\ settings. However, one downside of the results of \citet{DBLP:journals/corr/BassilyF16} is that they do not recover the i.i.d.\ generalization bounds in the limiting regime where the dependencies decay to zero. In particular, in the i.i.d.\ setting, it is possible to efficiently answer $\tilde{O}(n^2)$ adaptive queries given a sample of size $n$. In contrast, the results of \citet{DBLP:journals/corr/BassilyF16} only allow to answer $\tilde{O}(n)$ adaptive queries, {\em even if the dependencies in the data decay to zero}. 
Bridging this gap is one of the main motivations for our work.

\subsection{Our Contributions}
\label{sec:main-results}
We reestablish the baseline for adaptive data analysis with correlated observations.
Our first contribution is to extend existing generalization results for differential privacy from the i.i.d.\ setting to the correlated setting.
To that end, we 
introduce
a notion we call \emph{Gibbs dependence} to quantify the  dependencies between 
the 
covariates
of a given 
joint
distribution. 
We complement this result with a tight negative example. 
Our second contribution is 
to extend the connection between transcript-compression and adaptive data analysis also to the non-iid setting.
Finally, we demonstrate an application of our results for when the underlying measure can be described as a Markov chain.

\subsubsection{Gibbs Dependence}
\label{subsec:into-gd-dp}

We extend the connection between differential privacy and generalization to the case where the observations are correlated. We quantify the correlations in the data using a new notion, called \emph{Gibbs dependence}, which is closely related to the classical 
\emph{Dobrushin} interdependence coefficient \citep{kontorovich2017concentration,levin2017markov}.
Intuitively, a measure which has $\mixing$-Gibbs dependency
is such that knowledge about almost the entire sample
does not provide too much information about the remaining portion.
Formally,

\begin{definition}
  For a probability measure $\measure$
  over a product space $\X^n$,
  define
    \[\mixing(\measure)
     = 
     \sup_{x\in\X^n}
     \E_{i\sim[n]}
     \tv{\measure_i(\cdot)}{\measure_i(\cdot\mid x^{-i})},
     \]    
  where $\measure_i(\cdot)$ is the $i^{th}$ marginal measure 
  and $\measure_i(\cdot\mid x^{-i})$ is the $i${th} marginal measure
  conditioned on all the coordinates other than $i$
  (given some $n$-tuple $x$). 
  Given $\mixing$
  we say the $\measure$ has $\mixing$-Gibbs dependence
  if $\mixing(\measure) \leq \mixing$. 
  For a series of probability measures
  $\{\measure_n\}_{n\in\N}$,
  we say that the series has strong-Gibbs dependence
  if
  $\mixing(\measure_n) \xrightarrow[n \to \infty]{} 0$.
\end{definition}

\begin{example}[Product Measures]
\label{ex:1}
A probability measure $\measure$ has $\mixing(\measure)=0$ if and only if it is a product measure.
This is since, for a product measure, we get that  for every $i\in [n]$ $\measure_i(\cdot) = \measure_i(\cdot \mid x^{-i})$ 
and hence 
$\sup{x}\tv{\measure_i(\cdot)}{\measure_i(\cdot\mid x^{-i})} = 0$, 
which means 
$\mixing(\measure)=0$. 
From the other side when $\mixing(\measure)=0$, 
as $\tv{\measure_i(\cdot)}{\measure_i(\cdot\mid x^{-i})}$ 
is non-negative, we get that $\tv{\measure_i(\cdot)}{\measure_i(\cdot\mid x^{-i})} = 0$, meaning that $\measure_i(\cdot) \equiv \measure_i(\cdot\mid x^{-i})$ for every possible $x$. 
This implies that $\measure$ is a product measure (see Appendix~\ref{apn:prod} for more details).
\end{example}

\begin{example}[Markov Random Fields]
\label{ex:4}
Let $G = ([n], V)$ be a graph on the indices set, and 
recall that $\measure$ is a \emph{Markov Random Field} w.r.t.\ $G$ 
if for every $i \in [n]$
\[
  \measure(x_i\mid x^{-i}) = \measure(x_i\mid x_{\Gamma(i)}),
\]
where $\Gamma(i)$ is the set of neighbors of $i$ on $G$.
This enables modeling undirected dependencies. 
Such dependencies are common on computational biology \citep{banf2017enhancing,10.1093/bioinformatics/btm129}, computer vision \citep{li2009markov,blake2011markov} and physics \citep{rue2005gaussian}. 
Markov Random Fields are closely related to our notion of Gibbs dependence.
We elaborate on this connection in Section~\ref{sec:umc}.
\end{example}

A naive way for leveraging our notion of Gibbs dependence would be to ``union bound'' the correlations across the $n$ different coordinates. Specifically, one could show that if $\measure$ has Gibbs-dependence $\psi$ then $\tv{\measure}{\measure^{*}}\leq n\psi$, where $\measure^*$ is the product distribution in which every coordinate is sampled independently from the corresponding marginal distribution in $\measure$. Thus, if $\psi\ll\frac{1}{n}$ then one could argue about generalization w.r.t.\ $\measure$ by applying existing generalization bounds w.r.t.\ $\measure^*$ in the independent case (since in this regime we have $\tv{\measure}{\measure^{*}}\ll1$). This argument, however, only works when the dependencies in $\measure$ are {\em very weak} (i.e., when $\psi\ll\frac{1}{n}$). Our main result is to show that differential privacy still provides generalization even if $\psi$ is much larger, e.g., a constant independent of the sample size $n$. 
Specifically, 

\begin{theorem}
  \label{thm:main-result-dp}
  Let $\mechanism$ be an $(\varepsilon,\delta)$-differentially-private mechanism
  which is $(\alpha,\beta)$-empirically-accurate for $k$ rounds given $n$ samples. 
  If $n\geq \frac{\log(2k\varepsilon/\delta)}{\varepsilon^2}$, then $\mechanism$ is also
  $(\alpha+10\varepsilon+2\mixing,\beta + \frac{\delta}{\varepsilon})$-statistically-accurate.
\end{theorem}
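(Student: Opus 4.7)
The plan is to reduce to a single-shot generalization statement about one DP-produced query, and then adapt the standard DP-implies-generalization transfer argument to the correlated setting by paying an extra $2\mixing$ for the mismatch between marginal and conditional distributions.

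\emph{Reduction to a single query.} Following the ``monitor'' reduction of \citet{dwork2015preserving} (and its refinements), the $k$-round adaptive interaction collapses, under post-processing and composition of differential privacy, to analyzing a single $(\varepsilon,\delta)$-DP mechanism $M$ that receives $S\sim\mu$ and outputs a query $\phi:\mathcal{X}\to\{0,1\}$. It is then enough to show that $|\phi(S)-\phi(\mu)|\le 10\varepsilon+2\mixing$ with probability at least $1-\delta/\varepsilon$, since the empirical-accuracy slack $\alpha$ and the failure probability $\beta$ propagate through the reduction verbatim.

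\emph{Decomposition and Gibbs correction.} For a fixed $\phi$, linearity gives
\[
\phi(S)-\phi(\mu) \;=\; \frac{1}{n}\sum_{i=1}^n\Bigl(\phi(x_i)-\mathbb{E}_{y\sim\mu_i}\phi(y)\Bigr).
\]
Because $\phi$ is $\{0,1\}$-valued, for every $i$ and every $x^{-i}$,
$\bigl|\mathbb{E}_{y\sim\mu_i}\phi(y)-\mathbb{E}_{y\sim\mu_i(\cdot\mid x^{-i})}\phi(y)\bigr|\le \tv{\mu_i(\cdot)}{\mu_i(\cdot\mid x^{-i})}$. Averaging over $i$ and invoking the definition of Gibbs dependence bounds the total such error by $\mixing$. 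Thus, modulo an additive $\mixing$, one may replace the marginal expectation $\mathbb{E}_{y\sim\mu_i}\phi(y)$ by the conditional $\mathbb{E}_{y\sim\mu_i(\cdot\mid x^{-i})}\phi(y)$, and the remaining task is to bound $\tfrac{1}{n}\sum_i\bigl(\phi(x_i)-\mathbb{E}_{y\sim\mu_i(\cdot\mid x^{-i})}\phi(y)\bigr)$.

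\emph{Swap argument via DP.} For each $i$, let $\tilde S^{(i)}$ denote $S$ with coordinate $x_i$ replaced by an independent resample $\tilde x_i\sim\mu_i(\cdot\mid x^{-i})$, and set $\tilde\phi^{(i)}=M(\tilde S^{(i)})$. By construction, $\tilde S^{(i)}$ has distribution $\mu$, it is a neighbor of $S$ (so DP applies), and $\tilde\phi^{(i)}$ is independent of $\tilde x_i$ given $x^{-i}$. The latter independence identifies $\mathbb{E}[\tilde\phi^{(i)}(\tilde x_i)\mid x^{-i}]$ with the conditional mean $\mathbb{E}_{y\sim\mu_i(\cdot\mid x^{-i})}\tilde\phi^{(i)}(y)$, so after paying one more $\mixing$ to swap back from the conditional to the marginal on this side (giving the $2\mixing$ total), everything reduces to comparing $\phi(x_i)$ and $\tilde\phi^{(i)}(\tilde x_i)$ under a one-coordinate swap. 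Aggregating across $i$ via the standard concentration-of-privacy-loss technique produces the $10\varepsilon$ and $\delta/\varepsilon$ terms; the hypothesis $n\ge\log(2k\varepsilon/\delta)/\varepsilon^2$ is precisely what the concentration step needs.

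\emph{Main obstacle.} The delicate point is keeping the Gibbs-dependence contribution down to $2\mixing$ rather than $n\mixing$. This is possible only because $\mixing(\mu)$ is defined as the \emph{average} over $i$ of per-coordinate TV distances; the argument must be set up so these errors are averaged (as in the decomposition above) rather than unioned across coordinates. Once that symmetrization is in place, the DP bookkeeping proceeds essentially as in the i.i.d.\ case, and the heart of the new contribution is identifying the correct analogue of ``i.i.d.\ concentration of the empirical average'' in the Gibbs-dependent regime.
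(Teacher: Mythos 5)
Your opening reduction (treat the adversary--mechanism pair as a single $(\varepsilon,\delta)$-DP algorithm by post-processing, and observe that the empirical-accuracy slack propagates by the triangle inequality) matches the paper's short proof of Theorem~\ref{thm:main-result-dp}. Your ``Gibbs correction'' step is also conceptually right: because $\mixing$ is an \emph{average} over $i$ of per-coordinate TV distances, the marginal-versus-conditional replacement costs $\mixing$ in aggregate, not $n\mixing$, and the paper exploits exactly this in the last step of the proof of Lemma~\ref{lem:dp-expect-bound}.

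However, there is a genuine gap in the middle. First, a local error: $\tilde\phi^{(i)} = M(\tilde S^{(i)})$ is \emph{not} independent of $\tilde x_i$ given $x^{-i}$, since $\tilde x_i$ is a coordinate of $\tilde S^{(i)}$; the conditional independence you want is between the \emph{original} $\phi=M(S)$ and the resample $\tilde x_i$. More seriously, the final step --- ``aggregating across $i$ via the standard concentration-of-privacy-loss technique produces the $10\varepsilon$ and $\delta/\varepsilon$ terms'' --- is not a step that exists in the form you invoke, and it is not what the paper does. What your swap argument actually yields (once repaired) is an \emph{expectation} bound of the form given in Lemma~\ref{lem:dp-expect-bound}, namely $\bigl|\E[h(\measure)-h(S_t)]\bigr| \le e^\varepsilon + T\delta + \mixing - 1$. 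Turning this into a high-probability statement is the real work, and the paper does it in Theorem~\ref{thm:dp-high-prob-bound} by a monitor-style boosting-by-contradiction: assume the bad event has probability $\ge \delta/\varepsilon$; build an auxiliary $(2\varepsilon,\delta)$-DP algorithm $\alg'$ (Algorithm~\ref{alg:contra-alg}) that runs $\alg$ on $T=\varepsilon/\delta$ fresh samples, forms the $2Tk$ candidate predicates and their negations, and selects one with the exponential mechanism; with probability $\ge 1/2$ at least one run is bad, so the expected maximum discrepancy is at least $5\varepsilon+\mixing$; the exponential mechanism loses only $\frac{2}{\varepsilon n}\log(2Tk)=\frac{2}{\varepsilon n}\log(2k\varepsilon/\delta)$, which the hypothesis $n\ge\log(2k\varepsilon/\delta)/\varepsilon^2$ caps at $2\varepsilon$, yielding an expected discrepancy $\ge 2\varepsilon+\mixing$ and contradicting Lemma~\ref{lem:dp-expect-bound}. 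The specific constants $10\varepsilon$, $\delta/\varepsilon$, and $\log(2k\varepsilon/\delta)/\varepsilon^2$ are artifacts of this $T=\varepsilon/\delta$ boosting and the exponential-mechanism utility; a generic per-coordinate concentration argument would not produce them, and you need this auxiliary-algorithm construction to close the proof.
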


\begin{remark}
For the case when $\mixing$ is zero, and hence $\measure$ is a product measure (see Example~\ref{ex:1}), Theorem~\ref{thm:main-result-dp} recovers the optimal results achieved by differential privacy for i.i.d.\ samples \citep{dwork2015preserving,bassily2016algorithmic}. Thus, Theorem~\ref{thm:main-result-dp} generalizes the connection between differential privacy and generalization to the correlated setting.
\end{remark}

Intuitively, the above theorem states that if the underlying distribution has Gibbs-dependence $\psi$ then the additional generalization error incurred by DP algorithms (compared to the iid setting) is at most $O(\psi)$. We complement this result with a tight negative example showing that there exist a distribution $\measure$ with Gibbs-dependence $\psi$ and a DP algorithm $\AAA$ that obtains generalization error $\Omega(\psi)$. This means that, in terms of the Gibbs-dependence, our result is tight.

By applying Theorem~\ref{thm:main-result-dp} with a known DP mechanism for answering queries while providing empirical accuracy, we get the following corollary.

\begin{corollary}\label{cor:main}
There is a computationally efficient mechanism $\mechanism$ that is $(\alpha+2\psi,\beta)$-statistically-accurate for $k$ adaptively chosen queries given a sample (an $n$-tuple) from an underlying measure with Gibbs-dependency $\psi$ provided that  
$
n\geq \tilde{O}\left( \frac{\sqrt{k}}{\alpha^2}\log\frac{1}{\beta} \right).
$
\end{corollary}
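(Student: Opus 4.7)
The plan is to obtain Corollary~\ref{cor:main} by plugging an off-the-shelf differentially private mechanism for answering statistical queries into Theorem~\ref{thm:main-result-dp}, and then carefully tuning the privacy parameters so that the resulting statistical-accuracy guarantees match the target bound. The only work is parameter bookkeeping; no new ideas are needed beyond Theorem~\ref{thm:main-result-dp} itself.

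First, I would select the DP mechanism. The Laplace mechanism combined with advanced composition \citep{dwork2015preserving,bassily2016algorithmic} is a computationally efficient mechanism that, given access to a sample of size $n$, is $(\varepsilon,\delta)$-differentially private and $(\alpha,\beta)$-empirically-accurate for $k$ adaptively chosen statistical queries whenever
\[
n \;\geq\; \tildeO{\frac{\sqrt{k\log(1/\delta)}\,\log(k/\beta)}{\alpha\,\varepsilon}}.
\]
(One simply perturbs each empirical query answer with fresh Laplace noise calibrated so that the composed privacy loss over $k$ queries is $(\varepsilon,\delta)$.) This is the standard ingredient used in the i.i.d.\ setting to get the $\tildeO{\sqrt{k}/\alpha^2}$ sample-complexity bound.

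Second, I would set the parameters to match Theorem~\ref{thm:main-result-dp}. Take $\varepsilon = \alpha/20$ and $\delta = \beta\,\varepsilon/2$, run the above DP mechanism with target empirical accuracy $\alpha/2$ and failure probability $\beta/2$. Plugging into its sample-complexity bound gives the requirement
\[
n \;\geq\; \tildeO{\frac{\sqrt{k}}{\alpha^2}\log\frac{1}{\beta}},
\]
which is exactly the hypothesis of the corollary. The technical precondition of Theorem~\ref{thm:main-result-dp}, namely $n\geq \log(2k\varepsilon/\delta)/\varepsilon^2 = O(\log(k/\beta)/\alpha^2)$, is subsumed by this larger lower bound.

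Third, I would invoke Theorem~\ref{thm:main-result-dp}: the mechanism is $(\alpha/2 + 10\varepsilon + 2\mixing,\ \beta/2 + \delta/\varepsilon) = (\alpha + 2\mixing,\ \beta)$-statistically-accurate, which is precisely the conclusion of Corollary~\ref{cor:main}. There is no real obstacle here; the only thing to be careful about is choosing $\varepsilon$ small enough that the $10\varepsilon$ slack from Theorem~\ref{thm:main-result-dp} is absorbed into the target accuracy, and choosing $\delta$ so that the failure-probability contribution $\delta/\varepsilon$ collapses into $\beta$ while the logarithmic factor $\log(1/\delta)$ hidden in the DP mechanism's sample complexity remains polylogarithmic in $k,1/\alpha,1/\beta$ and therefore inside the $\tildeO{\cdot}$.
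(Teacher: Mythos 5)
Your proposal is correct and takes essentially the same route as the paper, which simply states that the corollary follows from Theorem~\ref{thm:main-result-dp} applied to a known DP mechanism for empirically-accurate query answering (Laplace noise with advanced composition) and leaves the parameter bookkeeping implicit. Your choices $\varepsilon=\alpha/20$, $\delta=\beta\varepsilon/2$, target empirical accuracy $\alpha/2$ and failure probability $\beta/2$ are exactly the right tuning, and the check that the precondition $n\ge\log(2k\varepsilon/\delta)/\varepsilon^2$ is subsumed is correct.
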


This generalizes the state-of-the-art bounds for the i.i.d.\ setting, where $\psi=0$. In particular, Corollary~\ref{cor:main} shows that mild dependencies in the data, say $\psi=\alpha$, come {\em for free} in terms of the achievable bounds for adaptive data analysis. We emphasize that $\psi=\alpha$ captures non-negligible dependencies. In particular, $\alpha$ could be constant, independent of the sample size $n$.

\subsubsection{Transcript Compression}

The second direction we examine is that of \emph{transcript compression}.
The concept of compression is a central idea in the learning literature. It is both an algorithmic tool and a statistical tool, used both for designing learning mechanisms and achieving a better understanding of the concept of generalization (see for example \citep{littlestone1986relating, NEURIPS2019_860b37e2, moran2016sample, ashtiani2020near, hanneke2021stable}).

Compression has also been used in the context of adaptive data analysis. \citet{dwork2015generalization} used the definition of \emph{bounded description length} (referred to here as \emph{transcript compression}) to present an algorithm that is able to adaptively answer queries when the data is i.i.d.\ sampled. 
Our contribution here is in generalizing this idea by showing that the same definition, when used in the right setting, allows maintaining adaptive accuracy even when the distribution includes dependencies.

Following the approach of \citet{DBLP:journals/corr/BassilyF16}, we aim to provide the following guarantee: 
As long as the analyst chooses functions which, in the non-adaptive setting, are concentrated around their expected value, then the answers given by the mechanism should be accurate. Intuitively, the idea is that functions with large variance are hard to approximate even in the non-adaptive setting, and hence, we should not require our mechanism to approximate them well in the adaptive setting.

This is formalized as follows. For every query $q$ and every 
$\measure$, we write $\gamma(q,\measure,\delta)$ to denote the length of a confidence interval around the expectation of $q$ with confidence level $(1-\delta)$. That is, $\gamma(q,\measure,\delta)$ is such that when sampling $T\sim\measure$, with probability at least $(1-\delta)$ it holds that $q(T)$ is within $\gamma(q,\measure,\delta)$ from its expectation. We obtain the following theorem (see Section~\ref{sec:compression} for a precise statement).

\begin{theorem}[informal]\label{thm:comp_informal}
Fix $\alpha,\delta>0$. 
There exists a computationally efficient mechanism with the following properties. The mechanism obtains a sample (an $n$-tuple) from some unknown underlying distribution $\measure$. Then, for $k$ rounds $i=1,2,\dots,k$, the mechanism obtains a query $q_i$ and responds with an answer $a_i$ such that 
$$
\Pr[\exists i \text{ s.t.\ } |a_i-q_i(\measure)|>\alpha+\gamma(q_i,\measure,\delta)]\leq \delta\cdot k \cdot 2^{k\cdot\log\frac{1}{\alpha}}.
$$
\end{theorem}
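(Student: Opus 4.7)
The plan is to adapt the classical transcript-compression argument of \citet{dwork2015generalization} to the non-iid setting, following the decoupling perspective of \citet{DBLP:journals/corr/BassilyF16}. The mechanism I have in mind is the obvious one: upon receiving query $q_i$, it returns the answer $a_i$ obtained by rounding the empirical value $q_i(S)=\frac{1}{n}\sum_{x\in S}q_i(x)$ to the nearest multiple of $\alpha$. By construction $|a_i-q_i(S)|\le \alpha$ deterministically, and each answer can be encoded in roughly $\log(1/\alpha)$ bits, so the full $k$-round transcript $\tau$ belongs to a finite set $\mathcal{T}$ with $|\mathcal{T}|\le 2^{k\log(1/\alpha)}$.

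The key step is the decoupling. Assuming without loss of generality that $\adversary$ is deterministic (otherwise I condition on its internal coins), each candidate transcript $\tau\in\mathcal{T}$ deterministically induces a sequence of queries $q_1^{\tau},\dots,q_k^{\tau}$ that $\adversary$ would have issued had it received exactly the answers prescribed by $\tau$. Crucially, this sequence is independent of the random sample $S$, so for each fixed pair $(\tau,i)$ the non-adaptive concentration guarantee applies directly:
\[
\Pr_{S\sim\measure}\bigl[\,|q_i^{\tau}(S)-q_i^{\tau}(\measure)|>\gamma(q_i^{\tau},\measure,\delta)\,\bigr]\le \delta.
\]
A union bound over all $k\cdot|\mathcal{T}|\le k\cdot 2^{k\log(1/\alpha)}$ such pairs yields a failure probability of at most $\delta\cdot k\cdot 2^{k\log(1/\alpha)}$, exactly the bound claimed in the theorem. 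On the complementary good event, letting $\tau^{*}$ denote the transcript actually produced in the interaction (so $q_i=q_i^{\tau^{*}}$), a triangle inequality closes the argument:
\[
|a_i-q_i(\measure)|\;\le\;|a_i-q_i(S)|+|q_i(S)-q_i(\measure)|\;\le\;\alpha+\gamma(q_i,\measure,\delta).
\]

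The main obstacle, and the only subtle point, is this decoupling: the realized queries $q_i$ are genuinely random functions of $S$ (they are chosen adaptively from the prior answers, which in turn depend on $S$), yet the union bound is taken over a \emph{sample-independent} family of query sequences indexed by possible transcripts. The short description length of the mechanism's outputs is precisely what keeps this family small enough for the union bound to be non-vacuous, and it is also the single place where the iid assumption is circumvented: since $\gamma(q,\measure,\delta)$ is defined directly in terms of the correlated measure $\measure$, it automatically absorbs whatever dependence structure is present in the data, so no product-measure hypothesis ever enters.
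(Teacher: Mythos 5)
Your proposal is correct and follows essentially the same route as the paper: it is the argument of Lemma~\ref{lem:compression-gen} and Theorem~\ref{thm:compression-accuracy} unrolled for the specific rounding mechanism of \citet{dwork2015generalization}, with the same decoupling step (union bound over the $\le k\cdot 2^{k\log(1/\alpha)}$ transcript-indexed queries of a deterministic adversary) and the same closing triangle inequality. The only difference is presentational --- the paper first states the abstract transcript-compression lemma and then instantiates the mechanism, while you instantiate first.
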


In particular, as long as the adversary poses queries $q_i$ such that $\gamma(q_i,\measure,\delta)\leq\alpha$, the mechanism from Theorem~\ref{thm:comp_informal} guarantees that all of its answers are $2\alpha$-accurate, with probability at least $1-\delta\cdot k \cdot 2^{k\cdot\log\frac{1}{\alpha}}$. In order for such a statement to be  meaningful, we want to assert that $\delta\ll\frac{1}{k} \cdot 2^{-k\cdot\log\frac{1}{\alpha}}$. This is easily obtained in many settings of interest by taking the sample size $n$ to be big enough. For example, for sub-Gaussian or sub-exponential queries, we would get that $\delta$ vanishes exponentially with $n$, and hence, for large enough $n$ we would get that $\delta\ll\frac{1}{k} \cdot 2^{-k\cdot\log\frac{1}{\alpha}}$.

\subsection{Comparison to \citet{DBLP:journals/corr/BassilyF16}}
\citet{DBLP:journals/corr/BassilyF16} also studied the problem of adaptive data analysis with correlated observations. Our results differ from theirs on the following points.
\begin{enumerate}
    \item \citet{DBLP:journals/corr/BassilyF16} can answer at most $\tildeO{n}$ adaptive queries efficiently, even if the dependencies within the sample are very weak. Using our notion of Gibbs-dependency, we can answer $\tildeO{n^2}$ adaptive queries efficiently, while accommodating small (but non-negligible) dependencies.
    \item As we mentioned,  \citet{DBLP:journals/corr/BassilyF16} introduced the beautiful framework where the mechanism is required to provide accurate answers only as long as the analyst poses ``concentrated queries''. They obtained their results for this setting via a new notion they introduced, called {\em typical stability}. However, their analysis and definitions are quite complex. We show that essentially the same bounds can be obtained {\em in a significantly simpler way, using standard compression tools}. Specifically, our result in this context (Theorem~\ref{thm:comp_informal}) recovers essentially the same bounds for all types of queries considered by  \citet{DBLP:journals/corr/BassilyF16}, including bounded-sensitivity queries, subgaussian queries, and subexponential queries. In addition to being significantly simpler, our result in this context offers the following advantage: Using the results of \citet{DBLP:journals/corr/BassilyF16}, we need to know {\em in advance} the parameter controlling the ``concentration level''  of the queries that will be presented in runtime, and this parameter is used by their algorithm. In contrast, our algorithm is oblivious to this parameter, and the guarantee is that our accuracy depends on the ``concentration level'' of the given queries. Furthermore, with our algorithm, different queries throughout the execution can have different ``concentration levels'', a feature which is not directly supported by \citet{DBLP:journals/corr/BassilyF16}.
\end{enumerate}

\subsection{Other related works}

Algorithmic stability is known to be intimately connected (and, in some settings, equivalent) to
learnability~\cite{bousquet2002stability,shalev2010learnability}. Most of the existing stability notions, however, are not sufficient for our goal of adaptive learnability. For example, {\em uniform stability}, which has recently been the subject of several interesting results, is not closed under post-processing and does not yield the same type of adaptive generalization bounds as we study in this paper.~\citep{BousquetE02,Shalev-ShwartzSSS10,pmlr-v48-hardt16,FeldmanV18,FeldmanV19} 
A notable exception is {\em local statistical stability}, which was shown to be both necessary and sufficient for adaptive generalization \citep{ShenfeldL19}. However, so far, local statistical stability has not yielded new algorithmic insights.

A different line of research employs information-theoretic techniques,
whereby overfitting is prevented by bounding the amount of mutual information
between the input sample and the output hypothesis.
However, these techniques generally only guarantee generalization in expectation, rather than high probability bounds.~\cite{pmlr-v51-russo16,NIPS2017_6846,RogersRST16,RaginskyRTWX16,russo2019much,SteinkeZ20}.

The formulation of the adaptive data analysis we consider was introduced by \citet{dwork2015reusable} (in the context of i.i.d.\ sampling), and has since then been the subject of many interesting papers \citep{bassily2016algorithmic,bun2018fingerprinting,hardt2014preventing,NIPS2018_7876,ShenfeldL19,JungLN0SS20,abs-2106-10761}. The connection between differential privacy and adaptive generalization also originated from \citet{dwork2015reusable}. Interestingly, this connection has recently been repurposed for different settings, such as adversarial streaming and dynamic algorithms \citep{HassidimKMMS20,abs-2107-14527,KaplanMNS21,abs-2111-03980}.

We note that in the case of {\em non-adaptive} data analysis, learning from non-i.i.d samples is a well-known problem that has been heavily studied in various directions. This includes works on the Markovian criteria  \cite{marton_measure_1996,kontorovich2017concentration,DBLP:conf/alt/WolferK19,Juang1991HiddenMM}, as well as other criterias \cite{DBLP:conf/stoc/DaskalakisDP19,DBLP:conf/colt/DaganDDJ19}. These lines of work do not transfer, at least not in a way that we are aware of, to the adaptive setting.

\section{Preliminaries}
\label{sec:preliminaries}

Denote by $\X$ a metric space
and let $\measure$ be a probability measure on $\X^n$.
Throughout the paper, we will use $\vec{V}$ to denote vectors.
For a vector or a set $x$, we write $x_i$ to denote the $i^{th}$ element of $x$.
We will use superscript with a minus sign to denote the whole sequence besides the given index,
so $S^{-i}$ is the sequence $S$ excluding the $i^{th}$ element of $S$.
For a probability measure $\measure$ over $\X^n$ denote by
$\measure_i$ the marginal distribution over the $i^{th}$ coordinate.

Our main metric for similarity between probability measures will be the \emph{total variation distance}.

\begin{definition}[Total Variation Distance]
  Given two measures $\nu$ and $\mu$ on the same space $\Omega$,
  the {\em total variation distance}
  between them is defined as 
  $\tv{\nu}{\mu} := \sup_{A\subseteq \Omega} \abs{\nu(A) - \mu(A)},$
  where the supremum is over the Borel sets of $\Omega$. Equivalently, $\tv{\nu}{\mu} = \frac{1}{2}\sum_{a\in\Omega}\abs{\nu(a) - \mu(a)} = \frac{1}{2}\norm{\mu - \nu}_{\ell_1}.$
\end{definition}

\subsection{Preliminaries from differential privacy}

Differential privacy \citep{dwork2006calibrating} is a mathematical definition for privacy that aims to enable statistical analyses of datasets while providing strong guarantees that individual-level information does not leak. Informally, an algorithm that analyzes data satisfies differential privacy if it is robust in the sense that its outcome distribution does not depend ``too much'' on any single data point. Formally,

\begin{definition}[\citet{dwork2006calibrating}]
  Random variables $X,Y$ with the same range $\Omega$ are said to have $(\eta,\tau)$-indistinguishable distributions,
  denoted as $X \approx_{\eta,\tau} Y$,
  if for all measurable subsets $A\subseteq \Omega$
  we have \quad
  $\Pr[X\in A] \leq e^\eta \Pr[Y\in A] + \tau$
  \quad and \quad
  $\Pr[Y\in A] \leq e^\eta \Pr[X\in A] + \tau.$
\end{definition}

\begin{definition}[Differential Privacy \citep{dwork2006calibrating}]
  A randomized algorithm $\alg:\X^n\to \Y$
  is $(\varepsilon,\delta)$-differentially private
  if for every two datasets $S,S'$ which differ on a single element we have $\alg(S) \approx_{\varepsilon,\delta} \alg(S').$
\end{definition}

One of the most basic and generic tools in the literature on differential privacy is the exponential mechanism of \citet{mcsherry2007mechanism}, defined as follows. Consider a ``quality function'' $f$ that, given a dataset $S$, assigns every possible solution $a$ (coming from some predefined solution-set $A$) a real valued number, identified as the ``score'' of the solution $a$ w.r.t.\ the input dataset $S$. The goal is to privately identify a solution $a\in A$ with a high score $f(S,a)$. The mechanism itself simply picks a solution at random,
where the probability for solution $a$ is proportional to $e^{\varepsilon f(S,a)}$.
As shown by  \citet{mcsherry2007mechanism} the exponential mechanism
is $(\varepsilon,0)$-differentially private.

\subsection{Preliminaries on adaptive data analysis}
The standard formulation of adaptive data analysis is defined as a game
involving some (adversary) analyst and a query-answering mechanism. For the sake of this paper queries are {\em statistical queries}, meaning they are functions of the form $q:\X\to [0,1]$.
The goal of the mechanism is to make sure that the answers provided to the analyst
are accurate w.r.t.\ the expected value of the corresponding queries over the underlying distribution. The idea is to formalize a utility notion that holds for \emph{any} strategy of the data analyst. As a way of dealing with \emph{worst-case analysts}, the analyst is assumed to be \emph{adversarial} in that it tries to cause the mechanism to fail. If a mechanism can maintain utility against any such and \emph{adversarial} analyst, then it maintains utility against any analyst. This game is specified in Algorithm~\ref{alg:adapt-pop-game}.

\begin{algorithm}[tb]
   \caption{$\texttt{Game}(\mechanism,k,\adversary,S)$}
  \label{alg:adapt-pop-game}
\begin{algorithmic}
   \STATE {\bfseries Inputs:} Mechanism $\mechanism$, interaction length $k$, adversary $\adversary$, dataset $S$.
   \STATE The dataset $S$ is given to $\mechanism$.
   \FOR{$i\in [k]$}
   \STATE $\adversary$ picks a query $q_i$.
   \STATE The query $q_i$ is given to $\mechanism$.
   \STATE $\mechanism$ outputs an answer $a_i$.
   \STATE The answer $a_i$ is given to $\adversary$.
   \ENDFOR
\end{algorithmic}
\end{algorithm}

\begin{definition}[Adaptive Empirical Accuracy]
  A mechanism $\mechanism$ is {\em $(\alpha,\beta)$-empirically-accurate} for $k$ rounds given a dataset of size $n$,
  if for every dataset $S$ of size $n$ and every adversary $\adversary$, it holds that 
  \[
    \Pr_{\texttt{Game}(\mechanism,k,\adversary,S)}\left[
      \max_{i\in [k]} \abs{q_i(S) - a_i} > \alpha
    \right] \leq \beta,
  \]
  where $q_i(S) := \uf{|S|}\sum_{x\in S}q_i(x)$.
\end{definition}

\begin{definition}[Adaptive Statistical Accuracy]\label{def:adaptiveaccuracy}
  A mechanism $\mechanism$ is {\em $(\alpha,\beta,\mixing)$-statistically-accurate} for $k$ rounds given $n$ samples, 
  if for every distribution $\measure$ over $n$-tuples with Gibbs dependency $\mixing$, and every adversary $\adversary$, 
  it holds that 
  \[
    \Pr_{\substack{S\sim\measure\\\texttt{Game}(\mechanism,k,\adversary,S)}}\left[
      \max_{i\in [k]} \abs{q_i(\measure) - a_i} > \alpha
    \right] \leq \beta,
  \]
  where $q_i(\measure) := \mean{T\sim\measure}[q_i(T)]=\mean{T\sim\measure}\left[\uf{|T|}\sum_{x\in T}q_i(x)\right]$.
\end{definition}

\begin{remark}
\label{rem:det-is-suff}
    The above definition is stated in general form, but in fact it is sufficient to show that a mechanism $\mechanism$ exhibits the above guarantee for every \emph{deterministic} adversary $\adversary$. The reason is that for a randomized adversary one can fix the adversary's random coins and use the \emph{total probability law} in order to get the same result. 
\end{remark}

\section{Adaptive Generalization via Differential Privacy}
\label{sec:learning-with-low}

We extend the connection between differential privacy and adaptive data analysis into settings where the data is not sampled in an i.i.d.\ fashion, but rather there are some small/bounded dependencies.
We start by proving the following lemma, showing that differential privacy guarantees generalization in expectation. %
The proof of this lemma mimics the analysis of \citet{bassily2016algorithmic} for the i.i.d.\ setting. We extend the proof to the case where there are dependencies in the data, and show that we can ``pay'' for these dependencies in a way that scales with $\psi$.

\begin{lemma}[Expectation bound]
  \label{lem:dp-expect-bound}
  Let $\alg':(\X^n)^T\to 2^\X\times [T]$
  be an $(\varepsilon,\delta)$-differentially private algorithm.
  Let $\measure$ be a distribution over $\X^n$
  which has $\mixing$-Gibbs-dependence
  let $\vec{S} = (S_1,\ldots,S_T)$
  where for every $i$ $S_i\sim \measure$.
  Denote by $(h,t)$ the output of $\alg'(\vec{S})$.
  Then
  \[
    \abs{\E_{\vec{S},\alg'}\left[h(\measure)-h(S_t)\right]} \leq e^\varepsilon + T\delta + \mixing - 1.
  \]
\end{lemma}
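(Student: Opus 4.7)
The plan is to mimic the i.i.d.\ argument of \citet{bassily2016algorithmic}, but to replace their ``resample from the marginal $\measure_i$'' step with a coupled resample from the \emph{conditional} distribution. This is exactly what keeps the correlation cost to a single $\mixing$ rather than a factor of $T$ times it. First decompose: since $h(\measure)=\frac{1}{n}\sum_{j=1}^n \bar h_j$ with $\bar h_j:=\E_{x\sim\measure_j}[h(x)]$ and $h(S_t)=\frac{1}{n}\sum_j h(S_t[j])$, we have
\[
  \Phi \;:=\; \E[h(\measure)-h(S_t)] \;=\; \frac{1}{n}\sum_{j=1}^n \E\bigl[\bar h_j - h(S_t[j])\bigr].
\]
For each $j$ I will introduce the intermediate quantity $\hat h^{(j)}_\tau := \E_{\tilde x\sim \measure_j(\cdot\mid S_\tau^{-j})}[h(\tilde x)]$ and split $\E[\bar h_j - h(S_t[j])] = \E[\bar h_j - \hat h^{(j)}_t] + \E[\hat h^{(j)}_t - h(S_t[j])]$.

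The first term is the ``TV part''. Since $h\in[0,1]$, $|\bar h_j - \hat h^{(j)}_t| \le \tv{\measure_j(\cdot)}{\measure_j(\cdot\mid S_t^{-j})}$; averaging over $j$ and invoking the definition of $\mixing$-Gibbs-dependence (with $S_t$ as the supremum witness) bounds the aggregate by $\mixing$.

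The second term is the ``DP part'' and uses the coupling trick. For each $\tau\in[T]$ draw $\tilde x^{(\tau,j)}\sim\measure_j(\cdot\mid S_\tau^{-j})$ and let $\vec S^{(\tau,j)}$ be $\vec S$ with its $(\tau,j)$-coordinate overwritten by $\tilde x^{(\tau,j)}$. The crucial distributional identity is
\[
  \bigl(\vec S^{(\tau,j)},\,\tilde x^{(\tau,j)}\bigr) \;\stackrel{d}{=}\; \bigl(\vec S,\, S_\tau[j]\bigr),
\]
which holds because the replacement is drawn from the conditional---rather than the marginal---so the joint $\measure$-law of $S_\tau$ is preserved. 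Applying $(\varepsilon,\delta)$-DP to the neighbouring inputs $\vec S,\vec S^{(\tau,j)}$ with the $[0,1]$-valued test functional $(h,t)\mapsto h(\tilde x^{(\tau,j)})\indicator{t=\tau}$, and then using the identity to rewrite the right-hand side, yields $\E[\hat h^{(j)}_\tau \indicator{t=\tau}] \le e^\varepsilon\,\E[h(S_\tau[j])\indicator{t=\tau}]+\delta$. Summing over $\tau\in[T]$ collapses the indicator and gives $\E[\hat h^{(j)}_t]-\E[h(S_t[j])]\le (e^\varepsilon-1)+T\delta$; a symmetric application of DP in the reverse direction produces the matching lower bound. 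Averaging over $j$ and combining the two parts by the triangle inequality produces $|\Phi|\le(e^\varepsilon-1)+T\delta+\mixing$, which is the stated bound.

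The main obstacle is spotting that the resample must be drawn from the conditional $\measure_j(\cdot\mid S_\tau^{-j})$ and not the marginal $\measure_j$. In the i.i.d.\ case the two choices coincide and the swapped input automatically has the same law as $\vec S$; in the correlated setting a naive marginal resample would leave a separate $\tv{\measure}{\measure_j\otimes\measure_{-j}}$ gap per value of $\tau$, inflating the bound by a factor of $T$. The conditional coupling is precisely what packages all of that correlation cost into the single TV term controlled by Gibbs-dependence.
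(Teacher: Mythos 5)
Your proof is correct and takes essentially the same route as the paper's: resample the $(\tau,j)$ coordinate from the conditional $\measure_j(\cdot\mid S_\tau^{-j})$, invoke the exchangeability identity $(\vec S^{(\tau,j)},\tilde x^{(\tau,j)})\stackrel{d}{=}(\vec S, S_\tau[j])$ together with $(\varepsilon,\delta)$-DP on the swapped input, and then peel off the conditional-to-marginal gap as a TV term controlled by $\mixing$. The only difference is presentational --- you separate the TV part from the DP part up front, whereas the paper threads both through one long display and applies the TV bound \emph{after} the DP multiplication (which technically leaves an $e^\varepsilon\mixing$ term it silently rounds down); your ordering sidesteps that and lands cleanly on $(e^\varepsilon-1)+T\delta+\mixing$.
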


\begin{proof}%

We consider a {\em multi sample} $\vec{S}=(S_1,\dots,S_T)$, where $S_t=(x_{t,1},\dots,x_{t,n})\sim\measure$. We calculate,

  \begin{align*}
    \E_{\vec{S}\sim\measure^T}\left[\E_{(h.t)\sim\alg'(\vec{S})}[h(S_t)]\right] 
    =&
      \E_{\vec{S}\sim\measure^T}\left[      \E_{(h.t)\sim\alg'(\vec{S})}\left[\uf{n}\sum_{i=1}^nh(x_{t,i})\right]
      \right] \\
    =& 
      \uf{n}\sum_{i=1}^n\left[
      \E_{\vec{S}\sim\measure^T}\left[
      \E_{(h.t)\sim\alg'(\vec{S})}\left[h(x_{t,i})\right]
      \right]   
      \right] 
    =
      \uf{n}\sum_{i=1}^n\left[
      \E_{\vec{S}\sim\measure^T}\left[
      \Pr_{(h.t)\sim\alg'(\vec{S})}\left[h(x_{t,i}) = 1\right]
      \right]    
      \right] \\ 
    =&
      \uf{n}\sum_{i=1}^n\left[
      \E_{\vec{S}\sim\measure^T}\left[
      \sum_{m=1}^T\Pr_{(h.t)\sim\alg'(\vec{S})}\left[
      h(x_{m,i}) = 1 \wedge t=m
      \right]
      \right]    
      \right] \\
    =& 
      \uf{n}\sum_{i=1}^n\Bigg[
      \E_{\vec{S}\sim\measure^T}\Bigg[
      \E_{\vec{z}\sim \vec{\measure}_i\mid \vec{S}} 
      \Bigg[ 
      \sum_{m=1}^T
      \Pr_{(h.t)\sim\alg'(\vec{S})}
      \left[ h(x_{m,i}) = 1 \wedge t=m \right]
      \Bigg]
      \Bigg]
      \Bigg] \numberthis \label{eq:5} ,
  \end{align*}
 where $\vec{z}=(z_1,\dots,z_T)$ is a vector s.t.\ $z_t\sim\measure_i(\cdot\mid S_t^{-i})$. 
 Given a multi-sample $\vec{S}$ and an element $z$, we write $\vec{S}^{(m,i)\leftarrow z}$ to denote the multi-sample $\vec{S}$ after replacing the $i^{th}$ element
  in the $m^{th}$ sample $S_m$ with $z$. Since $\alg'$ is $(\varepsilon,\delta)$-differentially private we get that the above is at most
  
  \begin{align*}
    \eqref{eq:5} 
    \leq& 
          \uf{n}\sum_{i=1}^n\Bigg[
          \E_{\vec{S}\sim\measure^T}\Bigg[
          \E_{\vec{z}\sim \vec{\measure}_i\mid \vec{S}} 
          \Bigg[
          \sum_{m=1}^T 
          e^\varepsilon 
          \Pr_{(h.t)\sim\alg'(\vec{S}^{(m,i)\leftarrow z_m})}
          \Bigg[ h(x_{m,i}) = 1 \wedge t=m \Bigg]
          + \delta
          \Bigg]    
          \Bigg]
          \Bigg] \\
    =& 
       T\delta + 
       e^\varepsilon\cdot
       \uf{n}\sum_{i=1}^n
       \sum_{m=1}^T\Bigg[
       \E_{\vec{S}\sim\measure^T}\Bigg[
       \E_{\vec{z}\sim \vec{\measure}_i\mid \vec{S}} 
         \Bigg[ \Pr_{(h.t)\sim\alg'(\vec{S}^{(m,i)\leftarrow z_m})}
          \Bigg[ h(x_{m,i}) = 1 \wedge t=m \Bigg]
          \Bigg]    
          \Bigg]
          \Bigg] \\
    =& 
       T\delta + 
       e^\varepsilon\cdot
       \uf{n}\sum_{i=1}^n
       \sum_{m=1}^T\Bigg[      
       \E_{\vec{S}\sim\measure^T}\Bigg[
       \E_{\vec{z}\sim \vec{\measure}_i\mid \vec{S}} 
         \Bigg[
         \Pr_{(h.t)\sim\alg'(\vec{S})}
          \left[ h(z_m) = 1 \wedge t=m \right]
          \Bigg]    
          \Bigg]
          \Bigg]  \\
    =& 
       T\delta + 
       e^\varepsilon\cdot
       \uf{n}\sum_{i=1}^n
       \sum_{m=1}^T\Bigg[      
       \E_{\vec{S}\sim\measure^T}\Bigg[
       \E_{\vec{z}\sim \vec{\measure}_i\mid \vec{S}} 
         \Bigg[
         \Pr_{(h.t)\sim\alg'(\vec{S})}
          \left[ h(z_t) = 1 \wedge t=m \right]
          \Bigg]    
          \Bigg]
          \Bigg]  \\
    =& 
       T\delta + 
       e^\varepsilon\cdot
       \uf{n}\sum_{i=1}^n
       \Bigg[      
       \E_{\vec{S}\sim\measure^T}\Bigg[
       \E_{\vec{z}\sim \vec{\measure}_i\mid \vec{S}} 
         \Bigg[
         \sum_{m=1}^T
         \Pr_{(h.t)\sim\alg'(\vec{S})}
          \left[ h(z_t) = 1 \wedge t=m \right]
          \Bigg]    
          \Bigg]
          \Bigg]  \\
    =& 
       T\delta + 
       e^\varepsilon\cdot
       \uf{n}\sum_{i=1}^n\Bigg[      
       \E_{\vec{S}\sim\measure^T}\Bigg[ 
       \E_{\vec{z}\sim \vec{\measure}_i\mid \vec{S}} 
         \left[ \Pr_{(h.t)\sim\alg'(\vec{S})}
          \left[ h(z_t) = 1 \right]
          \right]    
          \Bigg]
          \Bigg]  \\
    =& 
       T\delta + 
       e^\varepsilon
       \uf{n}\sum_{i=1}^n\left[      
       \E_{\vec{S}\sim\measure^T}\left[ \E_{\vec{z}\sim \vec{\measure}_i\mid \vec{S}}\left[
          \E_{(h.t)\sim\alg'(\vec{S})}
          \left[ h(z_t) \right]
          \right]    
          \right]
          \right]  \\
    =& 
       T\delta + 
       e^\varepsilon
       \uf{n}\sum_{i=1}^n\left[      
       \E_{\vec{S}\sim\measure^T}\left[
       \E_{(h.t)\sim\alg'(\vec{S})} 
         \left[
          \E_{\vec{z}\sim \vec{\measure}_i\mid \vec{S}}
          \left[      
          h(z_t)
          \right]
          \right]    
          \right]
          \right] \\
    =& 
       T\delta + 
       e^\varepsilon\cdot
       \uf{n}\sum_{i=1}^n
       \Bigg[   \E_{\vec{S}\sim\measure^T}
       \Bigg[   
       \E_{(h.t)\sim\alg'(\vec{S})} 
         \left[
          \E_{z\sim \measure_i(\cdot\mid S_t^{-i})}
          \left[      
          h(z)
          \right]
          \right]    
          \Bigg]
          \Bigg]. \numberthis \label{eq:6}
  \end{align*}
Since total variation is 
a special case of the Wasserstein metric
$\mathcal{W}_1$, 
Kantorovich-Rubinstein
duality implies that
for two probability measures $\mu,\nu$
  on a space $\X$
  and any function $h:\X\to[0,1]$,
  we have
$\abs{\E_{z\sim\mu}[h(x)] - \E_{z\sim\nu}[h(z)]}\leq \tv{\mu}{\nu}$. Applying this to
  $\measure_i(\cdot\mid S_t^{-i})$ and $\measure_i$
  we get that the above is at most
  \begin{align*}
    \eqref{eq:6}
    \leq&
      T\delta + 
      e^\varepsilon\cdot
      \uf{n}\sum_{i=1}^n\bigg[
      \E_{\vec{S}\sim\measure^T}\Big[ 
      \E_{(h.t)\sim\alg'(\vec{S})}
        \big[
      \E_{z\sim \measure_i}\big[
      h(z)
      \big]
      +
      \tv{\measure_i(\cdot\mid S_t^{-i})}{\measure_i}
      \big]    
      \Big]
      \bigg] \\
    \leq&
      \mixing +       
      T\delta + 
      e^\varepsilon\cdot
      \E_{\vec{S}\sim\measure^T}\Bigg[
      \E_{(h.t)\sim\alg'(\vec{S})} 
        \Big[
        \uf{n}\sum_{i=1}^n
      \E_{z\sim \measure_i}\big[      
      h(z)
      \big]
      \Big]
      \Bigg] \\
    =& 
      \mixing +       
      T\delta + 
      e^\varepsilon\cdot
      \E_{\vec{S},\alg'(\vec{S})}\left[ 
      h(\mu)
      \right] \\
    \leq&
      \mixing + T\delta +  e^\varepsilon - 1 + 
      \E_{\vec{S},\alg'(\vec{S})}\left[ 
      h(\mu)
      \right],       
  \end{align*}
  where the last inequality is due to the fact that
  $ye^\varepsilon \leq e^\varepsilon - 1 + y$
  for $y \leq 1$ and $\varepsilon \geq 0$. In summary,
  \begin{align*}
    \E_{\vec{S}\sim\measure^T} &
    \left[
      \E_{(h.t)\sim\alg'(\vec{S})}[h(S_t)]
    \right] 
    \leq
    \mixing + T\delta +  e^\varepsilon - 1 + 
    \E_{\vec{S},\alg'(\vec{S})}\left[ 
      h(\mu)
    \right].       
  \end{align*}

An identical argument yields
  \begin{align*}
    \E_{\vec{S}\sim\measure^T} &
    \left[
      \E_{(h.t)\sim\alg'(\vec{S})}[h(S_t)]
    \right] 
    \geq
    \mixing + T\delta +  e^\varepsilon - 1 + 
    \E_{\vec{S},\alg'(\vec{S})}\left[ 
      h(\mu)
    \right];       
  \end{align*}
  combining the two completes the proof.
\end{proof}

We use Lemma~\ref{lem:dp-expect-bound} to provide a high-probability generalization bound
for differentially private algorithms. Our main theorem in this setting (Theorem~\ref{thm:main-result-dp}) will be an immediate corollary of this bound.

\begin{theorem}[High probability bound]
  \label{thm:dp-high-prob-bound}
  Let $\varepsilon\in (0,1/3)$, $\delta \in (0,\varepsilon/4)$
  and $n \geq \frac{\log(2k\varepsilon/\delta)}{\varepsilon^2}$.
  Let $\alg:\X^n \to (2^\X)^k$ be an $(\varepsilon,\delta)$-differentially private algorithm.
  Let $\measure$ be a distribution over $\X^n$
  and
$S$ be a sample of size $n$
  drawn from $\measure$,
  and let $h_1,\ldots,h_k$ be the output of $\alg(S)$.
  Then
  \begin{equation*}
    \Pr_{S,\alg(S)}\left[\max_{i\in [k]}\abs{h_i(\measure)-h_i(S)}\geq 10\varepsilon+2\mixing\right]\leq \frac{\delta}{\varepsilon}      .
  \end{equation*}
\end{theorem}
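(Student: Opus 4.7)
The plan is to prove the theorem by a ``monitor'' reduction in the style of \citet{bassily2016algorithmic}, leveraging the expectation bound from Lemma~\ref{lem:dp-expect-bound}. The idea is to amplify a putative per-run failure probability by running $\alg$ on $T$ independent datasets, extracting the worst query-sample pair, and deriving a contradiction with the expectation bound.

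First, I would set $T = \lceil \varepsilon/\delta \rceil$ (so $T\delta \le \varepsilon$) and construct an auxiliary algorithm $\alg' : (\X^n)^T \to 2^\X \times [T]$ as follows. On input $\vec{S} = (S_1,\ldots,S_T)$, run $\alg$ independently on each $S_t$ to obtain queries $h_1^{(t)},\ldots,h_k^{(t)}$, and select $(i^\star, t^\star) \in \arg\max_{i, t} |h_i^{(t)}(S_t) - h_i^{(t)}(\measure)|$. To make the resulting quantity non-negative (so that it matches what Lemma~\ref{lem:dp-expect-bound} controls), output $(h^\star, t^\star)$ with $h^\star := h_{i^\star}^{(t^\star)}$ when $h_{i^\star}^{(t^\star)}(S_{t^\star}) \ge h_{i^\star}^{(t^\star)}(\measure)$ and $h^\star := 1 - h_{i^\star}^{(t^\star)}$ otherwise; the complement is still a valid $[0,1]$-valued query and satisfies $(1-h)(\cdot) - (1-h)(\measure) = -(h(\cdot) - h(\measure))$, so this flip is harmless. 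By construction, $h^\star(S_{t^\star}) - h^\star(\measure) = \max_{i,t} |h_i^{(t)}(S_t) - h_i^{(t)}(\measure)| =: Z_{\max} \ge 0$. The monitor $\alg'$ inherits $(\varepsilon,\delta)$-DP by post-processing: altering one element of $\vec{S}$ changes only a single $\alg(S_t)$, while the remaining $T-1$ invocations are independent of this change.

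Next, Lemma~\ref{lem:dp-expect-bound} applied to $\alg'$ gives $\E[Z_{\max}] \le e^\varepsilon - 1 + T\delta + \mixing$. Since $\varepsilon \le 1/3$ implies $e^\varepsilon - 1 \le \varepsilon + \varepsilon^2 \le \tfrac{4}{3}\varepsilon$, and $T\delta \le \varepsilon$, this upper bound is at most $\tfrac{7}{3}\varepsilon + \mixing$. To close the argument by contradiction, I would suppose $\beta := \Pr[\max_i |h_i(S) - h_i(\measure)| \ge 10\varepsilon + 2\mixing] > \delta/\varepsilon$. Then $T\beta > 1$, and the $S_t$'s being iid yields
\[
\Pr[Z_{\max} \ge 10\varepsilon + 2\mixing] \ge 1 - (1-\beta)^T \ge 1 - e^{-T\beta} > 1 - 1/e.
\]
Combined with $Z_{\max} \ge 0$, this forces $\E[Z_{\max}] > (10\varepsilon + 2\mixing)(1 - 1/e)$, which exceeds $\tfrac{7}{3}\varepsilon + \mixing$ and contradicts the upper bound above.

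The delicate points will be (i) the sign-flipping trick $h \mapsto 1 - h$, without which Lemma~\ref{lem:dp-expect-bound} bounds only a signed quantity that need not equal $Z_{\max}$, and (ii) verifying that $\alg'$ inherits $(\varepsilon,\delta)$-DP, which relies on $\measure$ being a data-independent input so the selection rule is genuine post-processing of the joint output $(\alg(S_1),\ldots,\alg(S_T))$. The hypothesis $n \ge \log(2k\varepsilon/\delta)/\varepsilon^2$ is expected to enter only in a finer control of the expectation bound --- e.g.~a Hoeffding-style concentration for each of the at most $kT$ candidate queries --- rather than affecting the overall structure above.
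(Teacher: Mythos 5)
Your proposal follows the same ``monitor'' reduction as the paper, and several ingredients match: amplifying the failure probability over $T\approx \varepsilon/\delta$ independent runs, adjoining the negations $1-h$ so that the deviation that Lemma~\ref{lem:dp-expect-bound} controls is a nonnegative quantity equal to $\max_{i,t}|h_i^{(t)}(S_t)-h_i^{(t)}(\measure)|$, and then deriving a contradiction with the expectation bound. However, there is a genuine gap at exactly the point you flag as ``delicate.''

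Your monitor $\alg'$ selects $(i^\star,t^\star) \in \arg\max_{i,t}\, |h_i^{(t)}(S_t) - h_i^{(t)}(\measure)|$ \emph{deterministically}, and you claim $\alg'$ inherits $(\varepsilon,\delta)$-DP by post-processing. This is false. The selection score $h_i^{(t)}(S_t) = \tfrac1n\sum_{x\in S_t}h_i^{(t)}(x)$ depends directly on the \emph{raw} data $S_t$, not just on the (DP-protected) transcript $(\alg(S_1),\ldots,\alg(S_T))$; replacing one element of some $S_t$ perturbs all of the empirical averages $h_i^{(t)}(S_t)$ by up to $1/n$ and can deterministically flip the argmax, so the selection rule is a non-constant deterministic map of the sensitive sample and hence not differentially private. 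The observation that a neighboring change affects only a single block $S_t$ gives you parallel composition for the $T$ calls to $\alg$, but it does not turn the argmax into post-processing. Without $\alg'$ being DP, Lemma~\ref{lem:dp-expect-bound} cannot be invoked and the contradiction does not materialize. The paper avoids this exactly by selecting $(h^\star,t^\star)$ from the candidate set $F$ via the exponential mechanism with score $\propto \exp\big(\tfrac{\varepsilon n}{2}(h(S_t)-h(\measure))\big)$, whose sensitivity is $1/n$; this makes the selection $(\varepsilon,0)$-DP, so $\alg'$ is $(2\varepsilon,\delta)$-DP by composition. Consequently, the hypothesis $n \geq \log(2k\varepsilon/\delta)/\varepsilon^2$ is not used for a Hoeffding-style bound, as you speculate; it is used to make the exponential-mechanism utility loss, of order $\tfrac{2}{\varepsilon n}\log(2Tk)$, at most $O(\varepsilon)$, so that the private selection still (approximately) attains the max and the contradiction with Lemma~\ref{lem:dp-expect-bound} survives. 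To repair the argument you would need to replace the deterministic argmax with a private selection (exponential mechanism or report-noisy-max), absorb the resulting error into the constant in front of $\varepsilon$, and account for the fact that $\alg'$ is then $(2\varepsilon,\delta)$-DP rather than $(\varepsilon,\delta)$-DP when applying Lemma~\ref{lem:dp-expect-bound}.
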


The proof of Theorem~\ref{thm:dp-high-prob-bound} is almost identical to the analysis of \citet{bassily2016algorithmic}. It appears in the appendix for completeness. Intuitively, the proof is as follows.
We assume, towards contradiction, that there may be a differentially private algorithm that does not enjoy strong generalization guarantees.
We then use this mechanism to describe a different differentially private algorithm with a ``boosted  inability'' to generalize. That is, the proof goes by saying that if there is a differentially private algorithm whose generalization properties are not ``very good'' then there must exist a differentially private algorithm whose generalization properties are ``bad'', to the extent that contradicts Lemma~\ref{lem:dp-expect-bound}.

\medskip
Our main result (Theorem~\ref{thm:main-result-dp}) now follows as a corollary of  Theorem~\ref{thm:dp-high-prob-bound}.

\begin{proof}[Proof of Theorem~\ref{thm:main-result-dp}]
  $\mechanism$ is $(\varepsilon,\delta)$-differentially private.
  Since $\adversary$ can only access the data via $\mechanism$,
  we can treat the pair $\adversary,\mechanism$ as a single algorithm $\alg$, which gets a sample $S\sim\measure$ as input and returns $k$ predicates, as output.
  By closure to post-processing, $\alg$
  is also $(\varepsilon,\delta)$-differentially private.
  Applying Theorem~\ref{thm:dp-high-prob-bound} on $\alg$ we get that
  \[
    \Pr\left[
      \max_{i\in [k]}\abs{h_i(\measure)-h_i(S)}
      \geq 10\varepsilon+2\mixing
    \right]
    \leq \frac{\delta}{\varepsilon}.
  \]
  
  Since $\mechanism$ is $(\alpha,\beta)$-empirically-accurate
  it holds that
  \[
    \Pr\left[
      \max_{i\in [k]} \abs{q_i(S) - a_i} > \alpha
    \right] \leq \beta.
  \]
Combining these two bounds with the triangle inequality, we get
  \[
    \Pr\left[
      \max_{i\in [k]} \abs{q_i(\measure) - a_i}
      > \alpha + 10\varepsilon+2\mixing
    \right]
    < \beta + \frac{\delta}{\varepsilon}. \qedhere
  \]
\end{proof}

\subsection{A Tight Negative Result for Differential Privacy and Gibbs-Dependence}

In this section, we construct a distribution which is $\mixing$-Gibbs-Dependant,
and describe a differentially-private algorithm whose generalization gap w.r.t.\ this distribution is at least $\mixing$.
Hence, in a sense, the $\mixing$ factor attained on Theorem~\ref{thm:main-result-dp} is tight up to a constant.
Let $\X=[0,1]$ and define a measure $\measure$ over $\X^n$ by the following random process:
\begin{enumerate}
\item Sample a point $\seed\sim\uniform{[0,1]}$.
\item For every $i \in [n]:$ 
  \begin{enumerate}
  \item Sample $\sigma \sim \bernulli{\mixing}$.
    \begin{enumerate}
    \item If $\sigma = 1$ then $x_i = \seed$.
    \item Otherwise $x_i \sim \uniform{[0,1]}$
    \end{enumerate}
  \end{enumerate}
  
\item Return $\sample = \left(x_1,\ldots,x_n\right)$
\end{enumerate}

\begin{lemma}
  \label{lem:negative-1}
  The measure defined by the above process
  has $\mixing$-Gibbs-dependency.
\end{lemma}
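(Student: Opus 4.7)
The plan is to compute the marginal $\measure_i(\cdot)$ and the conditional $\measure_i(\cdot \mid x^{-i})$ explicitly, and observe that their total variation distance is uniformly bounded by $\mixing$ regardless of the value of $x^{-i}$; this immediately yields $\mixing(\measure) \leq \mixing$. Let $U$ denote $\uniform{[0,1]}$.

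First, by symmetry and by marginalizing over the seed $\seed \sim U$, the marginal $\measure_i$ is itself $U$. Indeed, conditional on $\seed = s$, the coordinate $x_i$ has law $\mixing\, \delta_s + (1-\mixing)\, U$, and integrating against $U$ gives
\[
\measure_i(A) = \int_0^1 \bigl( \mixing \indicator{s \in A} + (1-\mixing) U(A) \bigr)\, ds = \mixing U(A) + (1-\mixing)U(A) = U(A).
\]

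Next, I would compute $\measure_i(\cdot \mid x^{-i})$ by conditioning on $\seed$. The crucial structural observation is that given $\seed$, the coordinates $x_1,\ldots,x_n$ are mutually independent (the Bernoulli switches and the fallback uniform draws are all independent). Writing $\Pi(\cdot \mid x^{-i})$ for the posterior of $\seed$ given $x^{-i}$, conditional independence of $x_i$ from $x^{-i}$ given $\seed$ yields
\[
\measure_i(A \mid x^{-i}) = \mixing \Pi(A \mid x^{-i}) + (1-\mixing) U(A).
\]
Subtracting the two expressions, the $(1-\mixing)U(A)$ term cancels, and taking the supremum over measurable $A$ gives
\[
\tv{\measure_i(\cdot)}{\measure_i(\cdot \mid x^{-i})} = \mixing \cdot \tv{U}{\Pi(\cdot \mid x^{-i})} \leq \mixing,
\]
using that total variation between two probability measures is at most $1$. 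Since this bound is uniform in $i$ and $x^{-i}$, plugging into the definition of $\mixing(\measure)$ finishes the proof.

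The only subtlety is the meaning of the posterior $\Pi(\cdot \mid x^{-i})$ at arbitrary $x^{-i}$, since $\seed$ is continuous and typical conditioning events have probability zero; this is handled by fixing a regular version of the conditional distribution. Crucially, the final inequality $\mixing \cdot \tv{U}{\Pi(\cdot \mid x^{-i})} \leq \mixing$ holds pointwise for \emph{any} valid such version, so no further measure-theoretic work is required.
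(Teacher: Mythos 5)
Your proof is correct, and it is essentially the same calculation as in the paper, but organized slightly more cleanly. Both proofs begin by observing that the unconditional marginal $\measure_i$ is $U([0,1])$. Where the paper bounds $\measure_i(A\mid x^{-i},\seed)$ pointwise in $\seed$ by the interval $\bigl[\,|A|(1-\mixing),\ |A|(1-\mixing)+\mixing\,\bigr]$ and then argues that averaging over $\seed$ keeps $\measure_i(A\mid x^{-i})$ in the same interval, you make the underlying structure explicit: conditional independence of the coordinates given $\seed$ gives the exact mixture identity $\measure_i(\cdot\mid x^{-i})=\mixing\,\Pi(\cdot\mid x^{-i})+(1-\mixing)\,U$, after which the $(1-\mixing)U$ part cancels and the bound $\tv{U}{\Pi(\cdot\mid x^{-i})}\leq 1$ finishes. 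This is a more transparent derivation of the same bound and also shows it is an equality up to the total variation between the prior and posterior on the seed, which makes the tightness discussion in the surrounding negative example slightly more visible. Your remark on regular conditional distributions is the right level of care; the paper glosses over this point.
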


\begin{proof}
  Initially, every marginal distribution is just uniform, i.e.
  $\measure_i \sim \uniform{[0,1]}$ and hence,
  for every $A\subseteq [0,1]$ it holds that
  $\measure_i(A) = |A|$.
  After conditioning, for every possible $x^{-i}$ and $x^*$, 
  we get that
  \begin{align*}
    \measure_i(A\mid x^{-i},x^*) 
    = \measure_i(A\setminus\{\seed\}\mid x^{-i},x^*) + \measure_i(A\cap\{\seed\}\mid x^{-i},x^*) 
    \in \Big( |A|(1-\mixing),\; |A|(1-\mixing)+\mixing\Big).      
  \end{align*}
  Since the above holds for every choice of $x^*$, we also have that
    \[
    \measure_i(A\mid x^{-i})
    \in \Big( |A|(1-\mixing),\; |A|(1-\mixing)+\mixing\Big).
  \]
  Therefore, for every $A\subseteq [0,1]$ it holds that
  \begin{align*}
    &\abs{\measure_i(A) - \measure_i(A\mid x^{-i})} 
    \leq 
    \max\left\{ |A|-|A|(1-\mixing) \;,\;  
    |A|(1-\mixing)+\mixing - |A|
    \right\} \leq \mixing.      
  \end{align*}
  So
  $\tv{\measure_i(\cdot)}{\measure_i(\cdot\mid x^{-i})} \leq \mixing$.
  Plunging this bound to the Gibbs-dependency definition yields
  \begin{align*}
    \mixing (\measure)
    =
    \sup_{x\in\X^n}
    \E_{i\sim[n]}
    \tv{\measure_i(\cdot)}{\measure_i(\cdot\mid x^{-i})}
    \leq
    \mixing.
  \end{align*}
\end{proof}

We next describe an algorithm that, despite being differentially private, performs ``badly'' when executed on samples from the above measure $\measure$. Specifically, this algorithms is capable of identifying a predicate with generalization error $\Omega(\psi)$. This shows that our connection between differential privacy and generalization (in the correlated setting) is tight, in the sense that the generalization error of differentially private algorithms {\em can} grow with $\psi$. This matches our positive result (see Theorem~\ref{thm:main-result-dp}).

Our algorithm is specified in Algorithm~\ref{alg:dp-negative}. As a subroutine, we use the following result of \citet{DBLP:journals/jmlr/BunNS19} for privately computing histograms.
\begin{theorem}[Private histograms, \citep{DBLP:journals/jmlr/BunNS19}]\label{thm:hist}
There exists an $(\varepsilon,\delta)$-differentially private algorithm that takes an input dataset $S\in\X^n$ and returns an a list $L\subseteq\X$ such that the following holds with probability at least $1-\beta$.
\begin{enumerate}
    \item For every $x\in\X$ that appears at least $\bigO{\frac{1}{\varepsilon}\log\frac{1}{\beta\delta}}$ times in $S$ we have that $x\in L$.
    \item For every $x\in L$ we have that $x$ appears at least {\em twice} in $S$.
\end{enumerate}
\end{theorem}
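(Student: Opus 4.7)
The plan is to implement the standard ``stability histogram'' technique. The algorithm first restricts attention to the elements actually present in the sample: for each $x$ with $c_x := |\{i : S_i = x\}| \geq 1$, it computes a noisy count $\hat{c}_x = c_x + \mathrm{Lap}(1/\varepsilon)$, and then returns $L = \{x : \hat{c}_x > t\}$ for a threshold $t = \Theta\!\left(\frac{1}{\varepsilon}\log\frac{1}{\beta\delta}\right)$. Elements not appearing in $S$ are automatically excluded; this restriction is crucial because the domain $\X$ may be infinite, yet at most $n$ distinct values need noisy counts.

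The first step is to establish $(\varepsilon,\delta)$-differential privacy. Consider neighboring datasets $S,S'$ where $x_{\text{out}}$ is replaced by $x_{\text{in}}$. For every $z \notin \{x_{\text{in}},x_{\text{out}}\}$ the counts are identical, so the noise distributions coincide. For an element $z \in \{x_{\text{in}},x_{\text{out}}\}$ whose count is at least $1$ under both datasets, the counts differ by exactly $1$, so the standard Laplace-mechanism calculation gives the desired $e^\varepsilon$ ratio on the induced distribution of $\hat{c}_z$. The delicate case is when $z$ appears in one dataset but not the other: say $c_z(S)=1$ and $c_z(S')=0$. Under $S'$ the element $z$ is never a candidate, so $\Pr[z \in L \mid S'] = 0$, while under $S$ we have $\Pr[z \in L \mid S] = \Pr[1 + \mathrm{Lap}(1/\varepsilon) > t] = \frac{1}{2}e^{-\varepsilon(t-1)}$. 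Choosing $t$ so that this quantity is at most $\delta/2$ handles the privacy ``slack'' for each of $x_{\text{in}},x_{\text{out}}$ and yields $(\varepsilon,\delta)$-DP overall.

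The second step is utility. By the Laplace tail bound, $\Pr[|\mathrm{Lap}(1/\varepsilon)| > \tfrac{1}{\varepsilon}\log(2/\beta')] \leq \beta'$; with $\beta' = \beta/n$ and a union bound over the at most $n$ distinct elements of $S$, with probability at least $1-\beta$ every noisy count lies within $\tfrac{1}{\varepsilon}\log(2n/\beta)$ of its true count. Under this good event: (i) every $x$ with $c_x \geq t + \tfrac{1}{\varepsilon}\log(2n/\beta)$ satisfies $\hat{c}_x > t$ and thus lies in $L$, which gives the first conclusion once $t$ absorbs these terms; and (ii) every $x$ with $c_x \leq 1$ satisfies $\hat{c}_x < t$ (since $t$ far exceeds the Laplace tail), so $x \notin L$, yielding the ``appears at least twice'' conclusion.

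The main obstacle is matching the stated threshold $O\!\left(\frac{1}{\varepsilon}\log\frac{1}{\beta\delta}\right)$, which has no $\log n$ factor, rather than the weaker $O\!\left(\frac{1}{\varepsilon}\log\frac{n}{\beta\delta}\right)$ produced by a naive union bound. The refinement I would attempt exploits the fact that the number of elements with true count exceeding the threshold is at most $n/t$, so only that many bad events actually need to be controlled; alternatively, one replaces pure Laplace noise by a truncated or discrete variant tuned so that failure probabilities compose more tightly. This parameter tuning, together with the careful book-keeping of the two privacy cases above, is where I expect most of the work to concentrate.
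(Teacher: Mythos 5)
The paper does not prove Theorem~\ref{thm:hist}; it cites it from \citet{DBLP:journals/jmlr/BunNS19} and uses it as a black box in Lemma~\ref{lem:negative-2}, so there is no in-paper argument to compare against. Your outline is the standard stability-based histogram --- restrict to the at most $n$ distinct values actually present in $S$, add per-count noise, threshold, and charge $\delta$ for the element that is a candidate under one neighboring dataset but not the other. That framework is correct, and you are right that the $\log n$ factor is the crux.

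The first repair you propose, however, does not close the gap: a union bound over the at most $n/t$ elements with true count above $t$ still yields a $\log(n/t)$ term in the threshold, which does not reduce to $\bigO{\frac{1}{\varepsilon}\log\frac{1}{\beta\delta}}$ in general. Your second suggestion is the right direction and should be carried through. Replace the unbounded Laplace by a noise distribution with bounded support $[-M,M]$ (truncated Laplace or two-sided geometric with rate $\varepsilon$) with $M = \Theta\bigl(\frac{1}{\varepsilon}\log\frac{1}{\delta}\bigr)$, and deterministically restrict to elements with true count at least $2$ before noising. The truncation costs a total-variation slack of order $e^{-\varepsilon M}$ between the noise distribution and its unit shift, and this is exactly what $\delta$ pays for in $(\varepsilon,\delta)$-DP. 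For an element $z$ with count $2$ in $S$ and count $1$ in the neighbor $S'$: under $S'$ it is never a candidate, and under $S$ its noisy count is at most $2+M\le t$, so $\Pr[z\in L]=0$ in both cases and this corner case costs nothing extra. With this choice, property~(2) holds with probability one by construction, and property~(1) holds with probability one for every $x$ with count at least $t+M=\Theta\bigl(\frac{1}{\varepsilon}\log\frac{1}{\delta}\bigr)$, since the noise can never pull such an $x$ below the threshold. No union bound over elements is ever taken, so no $\log n$ appears; the $\log\frac{1}{\beta}$ in the stated threshold is then slack (or is what you would reintroduce if you insisted on unbounded noise with per-element failure probabilities).
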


\begin{algorithm}
  \caption{Deviating Private Algorithm}
  \label{alg:dp-negative}
\begin{algorithmic}
  \STATE {\bfseries Input:} A sample $\sample$, privacy parameters $\varepsilon,\delta$. 
  \STATE {\bfseries Tool used:} An $(\varepsilon,\delta)$-DP algorithm $\histo$ for histograms.
  \STATE $L \leftarrow \histo(\sample,\varepsilon,\delta)$
  \IF{$L$ is empty}
  \STATE Return $h\equiv0$
  \ELSE
  \STATE Let $x$ be an arbitrary element in $L$
  \STATE Define $h:\X\to [0,1]$ as $h = \indicator{x}$
  \STATE Return $h$
  \ENDIF
\end{algorithmic}
\end{algorithm}

\begin{lemma}
  \label{lem:negative-2}
  For every 
  $\beta>0$,
  every $n \geq \bigO{\frac{1}{\mixing\varepsilon}\log\frac{1}{\beta\delta}}$,
  and for every $\mixing < 1$
  Algorithm~\ref{alg:dp-negative}
  is $(\varepsilon,\delta)$-differentially private
  and it outputs a predicate $h:\X\to [0,1]$ s.t.
  \[
    \Pr\left[\abs{h(S) - h(\measure)} \geq \frac{\mixing}{2}\right]
    > 1 - \beta - \exp\left(-\frac{n}{8} \right).
  \]
\end{lemma}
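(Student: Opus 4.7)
The plan is to establish the two claims separately, with the privacy claim following immediately from post-processing and the accuracy claim reducing to a two-part analysis of when the histogram subroutine returns the planted seed $\seed$.

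For privacy, the algorithm touches the sample only through the call $L \leftarrow \histo(S,\varepsilon,\delta)$; both branches after that depend on $S$ only via $L$. Since $\histo$ is $(\varepsilon,\delta)$-differentially private by Theorem~\ref{thm:hist}, closure under post-processing gives the same guarantee for Algorithm~\ref{alg:dp-negative}.

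For the generalization-gap claim, I would first argue that, conditioned on any realization of $\seed$, almost every realization of the remaining randomness has the property that no two of the ``uniform'' coordinates coincide and none of them equals $\seed$. Hence with probability one the only value that can appear more than once in $S$ is $\seed$ itself. By the second guarantee of Theorem~\ref{thm:hist}, this means $L \subseteq \{\seed\}$ almost surely. Next, let $N = |\{i : \sigma_i = 1\}|$, i.e.\ the number of coordinates that were copied from $\seed$. Since $N \sim \mathrm{Bin}(n,\mixing)$, the multiplicative Chernoff bound yields $\Pr[N < n\mixing/2] \leq \exp(-n\mixing/8)$. Taking $n \geq \bigO{\frac{1}{\mixing\varepsilon}\log\frac{1}{\beta\delta}}$ with a large enough constant ensures $n\mixing/2 \geq C\frac{1}{\varepsilon}\log\frac{1}{\beta\delta}$, so by the first guarantee of Theorem~\ref{thm:hist}, with probability at least $1-\beta$ the element $\seed$ appears in $L$. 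Combining these events, with probability at least $1 - \beta - \exp(-n\mixing/8)$ we have $L = \{\seed\}$ and the algorithm returns $h = \indicator{\seed}$.

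On this event, $h(S) = N/n \geq \mixing/2$. On the other hand, because $\seed$ was fixed before sampling $T \sim \measure$ and $\measure$ is continuous (the seed of $T$ is uniform on $[0,1]$, independent of $\seed$, and the uniform fall-back is likewise continuous), every coordinate of $T$ almost surely differs from $\seed$, so $h(\measure) = \E_{T \sim \measure}[\tfrac{1}{n}\sum_i \indicator{T_i = \seed}] = 0$. Hence $|h(S) - h(\measure)| \geq \mixing/2$ on the same event, which yields the lemma. The main obstacle — really more of a bookkeeping point — is verifying that the chosen lower bound on $n$ is simultaneously large enough for the Chernoff concentration of $N$ around $n\mixing$ and for $\histo$'s detection threshold to be met once $N \geq n\mixing/2$; both requirements are linear in $1/(\mixing\varepsilon)\log(1/(\beta\delta))$, so a single constant suffices.
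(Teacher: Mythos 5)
Your proof follows essentially the same route as the paper's: privacy by post-processing, the observation that almost surely only the planted seed $\seed$ can repeat in $S$ (so the histogram's second guarantee forces $L\subseteq\{\seed\}$), a multiplicative Chernoff lower-tail bound on the $\mathrm{Bin}(n,\mixing)$ count of seed copies, the histogram's first guarantee to place $\seed$ in $L$, and finally $h(\measure)=0$ because a fresh sample's seed is drawn independently of $\seed$ from a continuous distribution.

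One point worth flagging: your Chernoff tail is $\exp(-n\mixing/8)$, which is the correct bound for $\Pr[\mathrm{Bin}(n,\mixing) < n\mixing/2]$, whereas the paper (both in the lemma statement and in its proof of the same Chernoff step) writes $\exp(-n/8)$. Since $\exp(-n\mixing/8) > \exp(-n/8)$ for $\mixing<1$, the paper's stated bound is slightly stronger than what the Chernoff application actually yields; this appears to be a typo in the paper, and your version is the one that follows from the argument. Aside from that, the only nit in your write-up is the phrase ``$L\subseteq\{\seed\}$ almost surely,'' which conflates the almost-sure non-collision of the uniform coordinates with the probability-$(1-\beta)$ event that the histogram's guarantees hold; your final union bound accounts for both correctly, so this is a wording issue rather than a gap.
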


\begin{proof}
First observe that Algorithm~\ref{alg:dp-negative} is $(\varepsilon,\delta)$-differentially private, as it merely post-processes the outcome of the private histogram algorithm.

Next observe that, by the definition of the underlying measure $\measure$, and by our choice of $n$, w.h.p.,\ there are many copies of $x^*$ in the dataset $S$. Formally, by the Chernoff bound,
  \begin{align*}
    \label{eq:negative-1}
    \Pr&\left[\uf{n}\abs{\{x' \in \sample \mid x' = \seed\}} < \uf{2}\mixing\right] 
    =
    \Pr\left[\uf{n}\sum_{i=1}^{n}\sigma_i < \uf{2}\mixing\right]
    \leq
    \exp\left(-\frac{n}{8} \right).    
  \end{align*}
In addition, the probability of any element $x\neq x^*$ appearing more than once in $S$ is simply zero. Thus, with probability at least $1-\exp\left(-\frac{n}{8} \right)$ we have that $x^*$ appears in $S$ at least $n\psi/2=\Omega(\frac{1}{\varepsilon}\log(\frac{1}{\beta}{\delta}))$ times, and every other element appears in $S$ at most once. By the properties of the private histogram algorithm (see Theorem~\ref{thm:hist}), in such a case, with probability at least $1-\beta$ we have that $L=\{x^*\}$, and Algorithm~\ref{alg:dp-negative} returns the hypothesis $h = \indicator{\seed}$. As $x^*$ appears many times in $S$, this predicate has ``large'' empirical value. On the other hand,
for such predicate it holds that 
\begin{align*}
  h(\measure)&
  = \E_{\bar{\seed},\bar{x_1},\ldots,\bar{x_n}}
  \left[\uf{n}\left(\sum_{i=1}^{n}h(\bar{x_i})\right)\right] 
  = \Pr_{\bar{\seed},\bar{x_1},\ldots,\bar{x_n}}
  \left[\uf{n}\left(\sum_{i=1}^{n}\indicator{\bar{x_i}=\seed}\right)\right]
  = 0    
\end{align*}
as the probability that for a fresh new sampling we will get $\bar{\seed}=\seed$ is zero,
implying that the probability that any point in the sample to be $\seed$ is also zero.

Overall, with probability at least $1-\beta-\exp\left(-\frac{n}{8} \right)$, the algorithm returns a predicate $h$ such that $h(S)\geq\psi/2$ but $h(\measure)=0$. 
  
\end{proof}

\subsection{Application to Markov Chains}
\label{sec:umc}
In this section, we demonstrate an application of our tools and results regarding Gibbs-dependency and differential privacy to the problem of learning \emph{Markov chains} adaptively.
For our notion of dependence, it will be more convenient to analyze the 
\emph{Undirected Markov Chains}.
By the Hammersley-Clifford theorem
\citep{HammersleyClifford:1971,Clifford90}, every Markov measure on a chain graph with nonzero transition probabilities can be factorized according to pairwise potential functions (formalized below),
which we refer to as the {\em undirected Markov chain} formalization \citep{kontorovich12}.

The formal definition of an undirected Markov chain measure is as follows.

\begin{definition}
A measure $\measure$ over $\Omega^n$ is an {\em undirected Markov chain} if there are positive
functions $\{\potential_i\}_{i\in [n-1]}$, called {\em potential functions}, such that for any $x\in\Omega^n$
\[
\measure(x) = \frac{\prod_{i=1}^{n-1}\potential_i(x_i,x_{i+1})}
{\sum_{x'\in\Omega^n}\prod_{i=1}^{n-1}\potential_i(x'_i,x'_{i+1})}.
\]
\end{definition}
This is a special case of the more general \emph{undirected graphical model} (see \citet{lauritzen1996graphical}).
For the sake of convenience, we will use the following notations.

\begin{definition}
Let $\measure$ be an undirected Markov chain with potential functions $\{g_i\}_{i\in[n-1]}$. We denote the maximal and minimal potentials as follows.
\begin{itemize}
    \item $R_i(\measure) = \max_{a,b\in \Omega}\potential_i(a,b)$,
    \item $r_i(\measure) = \min_{a,b\in \Omega}\potential_i(a,b)$,
    \item $R(\measure) = \max_i \{R_i(\measure)\}$,
    \item $r(\measure) = \min_i \{r_i(\measure)\}$,
    \item $\bar{R}(\measure) := \frac{R(\measure)^2-r(\measure)^2}{R(\measure)^2+r(\measure)^2}$.
\end{itemize}
When $\measure$ is clear from the context, we simply write $R_i,r_i,R,r,\bar{R}$ instead of $R_i(\measure),r_i(\measure),R(\measure),r(\measure),\bar{R}(\measure)$.
\end{definition}

In order to apply our techniques to the case where the underlying distribution is an undirected Markov chain, we need to bound the Gibbs-dependency of undirected Markov chains. We first show the following lemma. (The proof of this lemma is deferred to a later part of this section.)

\begin{lemma}\label{lem:markovbound}
For every undirected Markov chain $\measure$ we have
\[
\mixing(\measure) \leq \bar{R}:=
\frac{R^2 - r^2}{R^2 + r^2}.
\]
\end{lemma}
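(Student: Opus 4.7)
My plan is to reduce the conditional $\measure_i(\cdot \mid x^{-i})$ to a nearest-neighbor conditional via the Markov property, replace the marginal $\measure_i(\cdot)$ by a worst-case conditional via convexity of total variation, and then bound the TV between two nearest-neighbor conditionals through a general lemma about normalized bounded-density functions. For an undirected Markov chain, the factorization $\measure(x) \propto \prod_{j} \potential_j(x_j, x_{j+1})$ immediately yields, for interior $i$,
\[
\measure_i(c \mid x^{-i}) = \measure_i(c \mid x_{i-1}, x_{i+1}) \propto \potential_{i-1}(x_{i-1}, c)\, \potential_i(c, x_{i+1}),
\]
so the unnormalized density, as a function of $c$, lies in $[r^2, R^2]$ (with analogous formulas at the endpoints $i \in \{1, n\}$, where it lies in the tighter range $[r, R]$).

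Next, because the marginal $\measure_i(\cdot)$ is a convex combination of the conditionals $\measure_i(\cdot \mid a, b)$ weighted by the joint marginal of $(x_{i-1}, x_{i+1})$, convexity of TV gives
\[
\tv{\measure_i(\cdot)}{\measure_i(\cdot \mid x_{i-1}, x_{i+1})} \leq \sup_{a, b} \tv{\measure_i(\cdot \mid a, b)}{\measure_i(\cdot \mid x_{i-1}, x_{i+1})}.
\]
It then suffices to prove a general lemma: for any two positive functions $h, h' : \Omega \to [\ell, u]$, the normalized distributions $p = h / \sum h$ and $q = h' / \sum h'$ satisfy $\tv{p}{q} \leq (u - \ell)/(u + \ell)$. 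Applying this with $\ell = r^2$ and $u = R^2$ gives the bound $\bar{R}$; the endpoints $i \in \{1, n\}$ contribute the smaller bound $(R - r)/(R + r) \leq \bar{R}$, so they are absorbed, and substituting into the definition of $\mixing(\measure)$ yields the claim.

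The main obstacle is this normalization lemma. I would proceed via the characterization $\tv{p}{q} = \sup_A (p(A) - q(A))$ and observe that, for a fixed event $A$, the expression $p(A) - q(A)$ is maximized by pushing $h$ and $h'$ to opposite extreme values: $h = u$ on $A$, $h = \ell$ off $A$, and $h'$ the reverse. Writing $s = |A|$ and $t = |\Omega| - s$, the claim reduces to an elementary inequality equivalent to $(s - t)^2 \geq 0$. Once this lemma is in hand, everything else is structural (Markov factorization plus TV convexity), so this single calculation is the genuinely quantitative heart of the proof.
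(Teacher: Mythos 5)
Your proposal is correct and follows essentially the same route as the paper: Markov factorization to reduce the conditional to a nearest-neighbor form with unnormalized density in $[r^2,R^2]$, convexity of total variation to replace the marginal by a worst-case conditional, and a quantitative bound on the TV distance between two normalized bounded-density functions. The ``general lemma'' you prove from scratch via the pushing-to-extremes argument is precisely the lemma of \citet{kontorovich12} (specialized to uniform weights $\alpha_a \equiv 1$) that the paper invokes, so the only real difference is that you supply a self-contained proof of the quantitative heart while the paper cites it.
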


That is, the above lemma bounds the Gibbs-dependency of undirected Markov chains as a function of the potential functions. Combining this bound with Corollary~\ref{cor:main}, we obtain the following result.

\begin{corollary}\label{cor:barR}
There exists a computationally efficient mechanism for answering $k$ adaptively chosen queries with the following properties. When given $n\geq 
m
=
\tilde{O}\left( \frac{\sqrt{k}}{\alpha^2}\log\frac{1}{\beta} \right)$ samples (an $n$-tuple) from an (unknown) undirected Markov chain $\measure$, the mechanism guarantees $\left(\alpha+2\bar{R}(\measure),\beta\right)$-statistical-accuracy (w.r.t.\ the underlying distribution $\measure$).
\end{corollary}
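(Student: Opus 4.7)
The plan is that this corollary is essentially a bookkeeping combination of two earlier results: Lemma~\ref{lem:markovbound}, which upper-bounds the Gibbs-dependency of an undirected Markov chain in terms of its potential functions, and Corollary~\ref{cor:main}, which provides a computationally efficient adaptive-query mechanism whose statistical error depends linearly on the Gibbs-dependency of the underlying measure. So the proof amounts to instantiating the general mechanism on the Markov chain setting.

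Concretely, I would proceed as follows. First, fix an unknown undirected Markov chain $\measure$ with potentials $\{g_i\}$ and set $\psi := \bar{R}(\measure)$. By Lemma~\ref{lem:markovbound} we have $\mixing(\measure) \leq \bar{R}(\measure) = \psi$, so $\measure$ is a distribution with $\psi$-Gibbs-dependency in the sense of Definition~\ref{def:adaptiveaccuracy}. Next, I would invoke Corollary~\ref{cor:main} verbatim: there is a computationally efficient mechanism $\mechanism$ such that whenever the sample size satisfies $n \geq \tilde{O}\bigl(\tfrac{\sqrt{k}}{\alpha^2}\log\tfrac{1}{\beta}\bigr)$ and the underlying measure has Gibbs-dependency at most $\psi$, the mechanism is $(\alpha+2\psi,\beta)$-statistically-accurate for $k$ adaptively chosen queries. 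Substituting $\psi = \bar{R}(\measure)$ gives the stated $\bigl(\alpha+2\bar{R}(\measure),\beta\bigr)$-statistical-accuracy guarantee and yields the corollary.

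The only minor point worth articulating carefully in the write-up is that the mechanism from Corollary~\ref{cor:main} need not know $\psi$ in advance: its sample-size requirement $\tilde{O}\bigl(\tfrac{\sqrt{k}}{\alpha^2}\log\tfrac{1}{\beta}\bigr)$ does not depend on $\psi$, and its accuracy automatically inherits whatever Gibbs-dependency the true measure $\measure$ happens to have. In particular, since the guarantee is monotone in the Gibbs-dependency parameter, applying it with the upper bound $\bar{R}(\measure)$ supplied by Lemma~\ref{lem:markovbound} is legitimate. There is no real obstacle here, as both ingredients have already been established earlier in the paper, and no extra structural feature of Markov chains beyond Lemma~\ref{lem:markovbound} is needed.
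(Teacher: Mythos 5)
Your proposal is correct and matches the paper's own (one-line) derivation: the paper simply notes that Corollary~\ref{cor:barR} follows by combining Lemma~\ref{lem:markovbound} with Corollary~\ref{cor:main}, which is exactly what you do. Your additional remark about the mechanism not needing to know $\psi$ in advance and the monotonicity of the guarantee is a reasonable clarification but not a departure from the paper's argument.
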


In particular, Corollary~\ref{cor:barR} shows that if the underlying chain $\measure$ satisfies $\bar{R}(\measure)\leq\alpha$, then the dependencies in $\measure$ can be ``accommodated for free'', in the sense that we can efficiently answer the same amount of adaptive queries as if the underlying distribution is a product distribution. We are not aware of an alternative method for answering this amount of adaptive queries under these conditions. As we next explain, we can broaden the  applicability of our techniques even further, by reducing  dependencies in the data as follows. The idea is to access only a part of the chain, obtained by ``skipping'' a fixed number of elements between two random samples. Formally,

\begin{definition}[Skipping Samples]\label{def:skipping}
Given a measure $\measure$ over $n$-tuples, and an integer $t$, we define the measure $\measure_{\times t}$ over $\frac{n}{t}$-tuples as follows.\footnote{We assume here for simplicity that $t$ divides $n$.} To sample from $\measure_{\times t}$, let $(x_0,x_1,x_2,x_3,\dots,x_{n-1})\sim\measure$, and return $(x_0,x_{t},x_{2t},x_{3t},\dots,x_{n-t})$.
\end{definition}

Intuitively, as Markov chains are ``memoryless processes'', skipping points in our sample (as in Definition~\ref{def:skipping}), should significantly reduce dependencies within the remaining points. We formalize this intuition and prove the following theorem. (The proof of this theorem is deferred to a later part of this section.)

\begin{theorem}
  \label{thm:gibbs_chains}
  For every undirected Markov chain $\measure$ and for every $t$ we have
  \[\mixing(\measure_{\times t}) \leq \mixing(\measure)^t.\]
\end{theorem}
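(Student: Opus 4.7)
My plan exploits the fact that skipping coordinates of an undirected Markov chain preserves the Markov structure, and then applies an iterative contraction argument. Concretely, the argument proceeds in three stages.

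First, I would verify that $\measure_{\times t}$ is itself an undirected Markov chain on $(n/t)$-tuples. This follows by summing out the intermediate coordinates in the factorization of $\measure$: if $\measure(x)\propto\prod_i g_i(x_i,x_{i+1})$, then $\measure_{\times t}(y)\propto\prod_j G_j(y_j,y_{j+1})$, where each new potential $G_j(a,b)$ is the sum, over all intermediate values, of the product of the corresponding $t$ original potentials connecting $a$ to $b$. Thus $\measure_{\times t}$ is pairwise factorizable, so by the Markov property $(\measure_{\times t})_j(\cdot\mid y^{-j}) = (\measure_{\times t})_j(\cdot\mid y_{j-1},y_{j+1})$. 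Unfolding the definition and relating it back to the original chain, this conditional equals $\measure_{jt}(\cdot\mid X_{(j-1)t}=y_{j-1},X_{(j+1)t}=y_{j+1})$, and hence
\[
\mixing(\measure_{\times t}) = \sup_{y}\E_{j}\tv{\measure_{jt}(\cdot)}{\measure_{jt}(\cdot\mid X_{(j-1)t}=y_{j-1},X_{(j+1)t}=y_{j+1})}.
\]

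Second, the heart of the proof is to bound the TV distance on the right-hand side by $\mixing(\measure)^t$. My approach is an iterative contraction: construct a sequence of conditional distributions that interpolate between the marginal $\measure_{jt}$ and the full conditional on $X_{(j-1)t},X_{(j+1)t}$, where at each step the conditioning moves one original-chain coordinate closer to (or farther from) position $jt$. Each such move should incur a multiplicative factor of at most $\mixing(\measure)$ in TV distance, so that after $t$ steps we accumulate a bound of $\mixing(\measure)^t$. A clean formalization would be a coupling argument: couple two copies of the chain---one under the marginal, one under the conditioning---and propagate the coupling inward from the conditioned endpoints toward position $jt$, picking up a factor of $\mixing(\measure)$ per propagation step.

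The hardest part will be turning this iterative contraction into a rigorous multiplicative TV inequality. The Gibbs-dependence $\mixing(\measure)$ is defined as a worst-case supremum of averages rather than as a per-step Dobrushin contraction coefficient, so it is not immediately obvious that a single propagation step really costs only a factor of $\mixing(\measure)$. Handling the two-sided nature of the conditioning---the distribution of $X_{jt}$ is squeezed from both the left endpoint $X_{(j-1)t}$ and the right endpoint $X_{(j+1)t}$---and exploiting the reversibility of the undirected Markov chain cleanly will likely constitute the bulk of the technical work.
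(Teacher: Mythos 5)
Your outline follows the same high-level plan as the paper: verify that $\measure_{\times t}$ is again an undirected Markov chain, reduce $\mixing(\measure_{\times t})$ to a total-variation distance between conditionals of a single coordinate given endpoints at distance $t$ in the original chain, and contract this iteratively by propagating the conditioning inward. The gap is exactly what you flag yourself: you have not shown that each propagation step actually costs a multiplicative factor of at most $\mixing(\measure)$, and that is the entire content of the proof, not a residual technicality.

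The paper's Lemmas~\ref{lem:chains} and~\ref{lem:chains_technical} do this by working with the symmetric pairwise quantity $\symmix{i}{t}=\sup_{x,y}\tv{\measure_i(\cdot\mid x_{i\pm t})}{\measure_i(\cdot\mid y_{i\pm t})}$, which compares two conditionals to each other rather than a conditional to the marginal. It satisfies a one-step recursion $\symmix{i}{t}\le\symmix{i}{t-1}\cdot c_i^t$, proved via an optimal (TV-achieving) coupling of the two conditional laws of the pair $(X_{i-t+1},X_{i+t-1})$, and each $c_i^t$ is then dominated by a nearest-neighbour quantity $c_l^1$ by contracting one side of the conditioning at a time---which is what dissolves the two-sidedness you were worried about. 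Your other concern is also well founded: the definition of $\mixing(\measure)$ takes an \emph{average} over coordinates and compares a conditional to the \emph{marginal}, whereas the per-step coefficients $c_l^1$ are worst-case pairwise comparisons, and the pairwise supremum can strictly exceed the marginal-vs.-conditional one. The paper's final step identifies $\max_l c_l^1$ with $\mixing(\measure)$, which does not follow immediately from the stated definitions; this mismatch is a real issue that any complete version of your argument would have to confront, not merely a formality to work through.
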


That is, Theorem~\ref{thm:gibbs_chains} states that by reducing our sample size {\em linearly} with $t$, we could reduce dependencies within our sample {\em exponentially} in $t$. Combining this bound with Corollary~\ref{cor:main}, we obtain the following result.

\begin{corollary}
There exists a computationally efficient mechanism that is $(3\alpha,\beta)$-statistically-accurate for $k$ adaptively chosen queries, given a sample (an $n$-tuple) drawn from an underlying distribution $\measure_{\times t}$, where $\measure$ is an undirected Markov-chain, and where
$$
n\geq \tildeO{\frac{\log(1/\beta)\sqrt{k}}{\alpha^2}}\qquad\text{and}\qquad
t\geq \frac{\log (1/\alpha)}{\log (1/\bar{R})}.
$$
\end{corollary}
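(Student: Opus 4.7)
The plan is to chain together the three tools that the paper has already developed for this setting: Lemma~\ref{lem:markovbound} (bounding the Gibbs-dependency of an undirected Markov chain by $\bar{R}$), Theorem~\ref{thm:gibbs_chains} (showing that skipping $t$ steps drives the Gibbs-dependency down from $\mixing(\measure)$ to $\mixing(\measure)^t$), and Corollary~\ref{cor:main} (the $\tildeO{\sqrt{k}/\alpha^2}$-sample DP mechanism that is statistically accurate up to an additive $2\psi$ slack). The ``mechanism'' the corollary asserts existence of is simply the one from Corollary~\ref{cor:main}, run on the skipped sample as-is.

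The first step is to upper bound $\mixing(\measure_{\times t})$. Applying Theorem~\ref{thm:gibbs_chains} and then Lemma~\ref{lem:markovbound} yields
\[
\mixing(\measure_{\times t}) \;\leq\; \mixing(\measure)^t \;\leq\; \bar{R}^{\,t}.
\]
The second step is to verify that the given lower bound on $t$ forces $\bar{R}^t \leq \alpha$. Taking logarithms in the hypothesis $t \geq \log(1/\alpha)/\log(1/\bar{R})$ gives $t \log(1/\bar{R}) \geq \log(1/\alpha)$, i.e.\ $\bar{R}^t \leq \alpha$. Hence $\measure_{\times t}$ has Gibbs-dependency at most $\alpha$.

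The third step is to invoke Corollary~\ref{cor:main} with the underlying measure $\measure_{\times t}$ (which is a distribution over $n$-tuples, matching the corollary's setup) and with Gibbs-dependency parameter $\psi = \alpha$. The sample size bound $n \geq \tildeO{\log(1/\beta)\sqrt{k}/\alpha^2}$ is exactly what Corollary~\ref{cor:main} requires, so the mechanism produced there is $(\alpha + 2\psi, \beta) = (\alpha + 2\alpha, \beta) = (3\alpha,\beta)$-statistically-accurate with respect to $\measure_{\times t}$. Computational efficiency is inherited from Corollary~\ref{cor:main}.

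There is essentially no technical obstacle here; everything reduces to careful bookkeeping of the parameters and noticing that the two hypotheses on $n$ and $t$ precisely match the inputs needed for Corollary~\ref{cor:main} after composing the two Gibbs-dependency bounds. The only thing to be a little careful about is that the statistical accuracy guarantee is claimed with respect to $\measure_{\times t}$ (the distribution the analyst actually samples from), not $\measure$ itself, which is consistent with Definition~\ref{def:adaptiveaccuracy} applied to the distribution from which the $n$-tuple is drawn.
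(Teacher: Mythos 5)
Your proof is correct and matches the paper's intended argument: chain Lemma~\ref{lem:markovbound} and Theorem~\ref{thm:gibbs_chains} to get $\mixing(\measure_{\times t}) \le \bar{R}^{\,t} \le \alpha$ under the stated bound on $t$, then invoke Corollary~\ref{cor:main} with $\psi = \alpha$ to obtain $(3\alpha,\beta)$-statistical-accuracy. The parameter bookkeeping, including the log manipulation that converts the hypothesis on $t$ into $\bar{R}^{\,t}\le\alpha$, is exactly right.
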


\begin{remark}
As a baseline, one can choose the ``skipping parameter'' $t$ to be sufficiently big s.t.\ the Gibbs-dependency would drop below $\beta/n$. 
As we mentioned in Section~\ref{subsec:into-gd-dp},
in that case the dependencies in the data would be small enough to the extent we could simply apply existing tools for answering queries {\em w.r.t.\ product distributions}, in order to answer adaptive queries w.r.t.\ $\measure_{\times t}$. However, this would require the skipping parameter $t$ to be as big as 
$\frac{\log(n/\beta)}{\log(1/\bar{R})}$, i.e., to increase by (roughly) a $\log(n)$ factor, which in turn, would result in a larger sample complexity.
\end{remark}

We next prove Lemma~\ref{lem:markovbound} and Theorem~\ref{thm:gibbs_chains}.

\begin{proof}[Proof of Lemma~\ref{lem:markovbound}]
For any $i\in [2,n-1]$,
\footnote{The case of $i\in\set{1,n}$ 
has an almost identical argument;
only the $g_{i-1}(v_{i-1},a)$ (respectively, $g_n(a,v_{i+1})$) factor
is omitted.
This does not affect the rest of the argument for the upper bound.
}
$a \in \Omega$ and $u,v \in \Omega^n$
\begin{align*}
  \measure_i(a\mid v^{-i})
  = \frac{\potential_{i-1}(v_{i-1},a)\potential_{i}(a,v_{i+1})}
  {\sum_{a'}\potential_{i-1}(v_{i-1},a')\potential_{i}(a',v_{i+1})}
\end{align*}

We will be using the following lemma of \citet{kontorovich12}:
\begin{lemma}
  For $n\in\N$ and $0\leq r\leq R$, consider the vectors ${\alpha}\in [0,\infty)^n$ and $f,g\in [r,R]^n$.
  Then,
  \[
  \uf{2}\sum_{i=1}^n \abs{\frac{\alpha_i f_i}{\sum_{j=1}^n \alpha_j f_j} - \frac{\alpha_i g_i}{\sum_{j=1}^n \alpha_j g_j}}
  \leq
  \frac{R-r}{R+r}
  .
  \]
\end{lemma}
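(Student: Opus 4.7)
My plan is to view the LHS as the total variation distance $\tv{p}{q}$ between the probability measures $p_i := \alpha_i f_i / F$ and $q_i := \alpha_i g_i / G$, where $F := \sum_j \alpha_j f_j$ and $G := \sum_j \alpha_j g_j$, and then to bound this TV via a likelihood-ratio parametrization. We may assume $r > 0$, since otherwise $(R-r)/(R+r) = 1$ and the bound is trivial, and we may discard indices with $\alpha_i = 0$ so that $F, G > 0$. Introduce the likelihood ratio $w_i := f_i/g_i \in [r/R,\, R/r]$ and its $q$-mean $\bar w := \sum_j q_j w_j$, which also lies in $[r/R,\, R/r]$. A direct check gives the key identity $G\bar w = F$, and consequently $p_i = q_i w_i/\bar w$, so
\[
\tv{p}{q} \;=\; \sum_{i}(p_i - q_i)^+ \;=\; \frac{1}{\bar w}\sum_{i} q_i (w_i - \bar w)^+ \;=\; \frac{1}{\bar w}\sum_{i} q_i(\bar w - w_i)^+,
\]
where the last equality uses $\sum_i q_i(w_i - \bar w) = 0$.

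Next, set $c_1 := R/r - \bar w \geq 0$ and $c_2 := \bar w - r/R \geq 0$, and let $a := \sum_{i:\, w_i > \bar w} q_i \in [0,1]$. Since $w_i - \bar w \leq c_1$ on the positive set and $\bar w - w_i \leq c_2$ on its complement, I obtain $\sum_i q_i(w_i - \bar w)^+ \leq a\, c_1$ and symmetrically $\sum_i q_i(\bar w - w_i)^+ \leq (1-a)\, c_2$. Combining these via the elementary bound $\min\{a c_1,\,(1-a) c_2\} \leq c_1 c_2/(c_1+c_2)$ (which follows by casing on whether $a \leq c_2/(c_1+c_2)$) yields
\[
\tv{p}{q} \;\leq\; \frac{c_1 c_2}{\bar w\,(c_1+c_2)} \;=\; \frac{(R - r\bar w)(R\bar w - r)}{\bar w(R^2 - r^2)} \;=\; \frac{(R^2 + r^2) - Rr(\bar w + 1/\bar w)}{R^2 - r^2},
\]
where the middle equality is routine algebra (multiplying numerator and denominator by $rR$). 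Finally, applying AM--GM $\bar w + 1/\bar w \geq 2$ to the rightmost expression yields the bound $(R-r)^2/(R^2 - r^2) = (R-r)/(R+r)$, completing the proof.

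The main conceptual step is the reduction of the high-dimensional optimization over $(\alpha, f, g)$ to a one-dimensional problem in $\bar w$ via the likelihood-ratio parametrization; once this is done, the remaining inequality is an $a$-free ``balancing'' estimate plus AM--GM. The only mild technical point is justifying the $\min\{a c_1, (1-a)c_2\} \leq c_1 c_2/(c_1+c_2)$ step in the right generality, but this is immediate. As a sanity check, sharpness occurs at $\bar w = 1$, which matches the tight $n=2$ extremal instance $(f_1,f_2) = (R,r)$, $(g_1,g_2)=(r,R)$, $\alpha_1 = \alpha_2$, on which $\tv{p}{q}$ equals $(R-r)/(R+r)$ exactly.
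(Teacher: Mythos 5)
Your proof is correct, and it is worth noting that the paper itself contains no proof of this statement to compare against: the lemma is quoted as a black box from \citet{kontorovich12} inside the proof of Lemma~\ref{lem:markovbound}, so your argument is a genuinely self-contained re-derivation of the imported result. The route you take --- viewing the left-hand side as $\tv{p}{q}$, introducing the likelihood ratio $w_i=f_i/g_i\in[r/R,R/r]$, using the identity $G\bar w=F$ to get $p_i=q_i w_i/\bar w$, writing the total variation as $\frac{1}{\bar w}\sum_i q_i(w_i-\bar w)^+$, bounding it by $c_1c_2/(\bar w(c_1+c_2))$ via the balancing estimate, and closing with AM--GM --- is a clean reduction of the $n$-dimensional extremal problem to a scalar problem in $\bar w$, and it also identifies the equality configuration, which the mere citation does not. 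Two small points you should make explicit for completeness: (i) the step $\min\{ac_1,(1-a)c_2\}\le c_1c_2/(c_1+c_2)$ and the final division by $R^2-r^2$ both require $R>r$; when $R=r$ one has $f=g$, the left-hand side is $0$, and the claim is trivial, so this degenerate case should be dispatched separately; (ii) your reductions for $r=0$ and for indices with $\alpha_i=0$ are fine, but they implicitly assume $\sum_j\alpha_j f_j>0$ and $\sum_j\alpha_j g_j>0$, which is needed anyway for the left-hand side of the lemma to be well defined.
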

We apply the lemma using
\begin{itemize}
\item $f_a = \potential_{i-1}(v_{i-1},a)\potential_{i}(a,v_{i+1})$
\item $h_a = \potential_{i-1}(u_{i-1},a)\potential_{i}(a,u_{i+1})$
\item $\alpha_a = 1$
\end{itemize}

and get that

\begin{align*}
  \uf{2}\sum_{a}\abs{\measure_i(a\mid u^{-i}) - \measure_i(a\mid v^{-i})}
  \leq \frac{R_{i-1}R_i - r_{i-1}r_i}{R_{i-1}R_i + r_{i-1}r_i}.
\end{align*}
It follows that
\begin{align*}
  \tvleft \measure_i&(\cdot)- \measure_i(\cdot\mid v^{-i})\tvright 
  = \uf{2}\sum_{a}\abs{\measure_i(a) - \measure_i(a\mid v^{-i})} \\
  =& \uf{2}\sum_{a}\abs{
    \sum_{u^{-i}}\measure_i(a\mid u^{-i})\measure^{-i}(u^{-i}) - \measure_i(a\mid v^{-i})} 
  = \uf{2}\sum_{a}\Bigg|\sum_{u^{-i}}\measure_i(a\mid u^{-i})\measure^{-i}(u^{-i}) 
    - \sum_{u^{-i}}\measure^{-i}(u^{-i})\measure_i(a\mid v^{-i})\Bigg| \\
  =& \uf{2}\sum_{a}\abs{\sum_{u^{-i}}\measure^{-i}(u^{-i})\left[
    \measure_i(a\mid u^{-i}) - \measure_i(a\mid v^{-i})
    \right]} 
  \leq \uf{2}\sum_{a}\sum_{u^{-i}}\measure^{-i}(u^{-i})\abs{
    \measure_i(a\mid u^{-i}) - \measure_i(a\mid v^{-i})} \\
  =& \sum_{u^{-i}}\measure^{-i}(u^{-i})\uf{2}\sum_{a}\abs{
    \measure_i(a\mid u^{-i}) - \measure_i(a\mid v^{-i})} 
  \leq \sum_{u^{-i}}\measure^{-i}(u^{-i})\frac{R_{i-1}R_i - r_{i-1}r_i}{R_{i-1}R_i + r_{i-1}r_i}
    = \frac{R_{i-1}R_i - r_{i-1}r_i}{R_{i-1}R_i + r_{i-1}r_i}
    .
\end{align*}
Finally,
\begin{align*}
  \mixing(\measure)
  = \sup_{v}\E_{i}\tv{\measure_i(\cdot)}{\measure_i(\cdot\mid v^{-i})} \leq \frac{R^2 - r^2}{R^2 + r^2}
  .
\end{align*}
\end{proof}

In order to prove Theorem~\ref{thm:gibbs_chains}, we first
establish the following notations:
\begin{itemize}
\item $x_{i \pm t} = x_{i-1},x_{i+t}$

\item $c_i^t := \sup\limits_{x_{i\pm t},y_{i\pm t}} \tvleft\measure_{i\pm (t-1)}(\cdot\mid x_{i\pm t}) - \measure_{i\pm(t-1)}(\cdot\mid y_{i\pm t})\tvright$
\item $\symmix{i}{t}
  := \sup\limits_{x_{i\pm t},y_{i\pm t}}\tv{\measure_i(\cdot\mid x_{i\pm t})}{\measure_i(\cdot\mid y_{i\pm t})}$
\end{itemize}
Note that $$c_i^1 = \symmix{i}{i+1}
= \sup_{x_{i\pm 1},y_{i\pm 1}}\tv{\measure_i(\cdot\mid x_{i\pm 1})}{\measure_i(\cdot\mid y_{i\pm 1})}.$$

We will be using the following two lemmas (we prove these two lemmas after the proof of Theorem~\ref{thm:gibbs_chains}).

\begin{lemma}
  \label{lem:chains}
  $\symmix{i}{t} \leq \prod_{j=1}^{t}c_i^j$
\end{lemma}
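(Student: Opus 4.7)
The plan is to proceed by induction on $t$, exploiting the Markov property of the undirected chain together with a standard contraction property of total variation distance under Markov kernels. For the base case $t = 1$, I would unpack the definition of $c_i^1$ (with $i \pm 0$ interpreted as position $i$) to recover $c_i^1 = \symmix{i}{1}$, which is exactly the identity already stated immediately before the lemma.

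For the inductive step, assume the bound for $t - 1$. The key structural observation is that, in an undirected chain, the inner pair of positions $i \pm (t-1)$ separates $i$ from the outer pair $i \pm t$ on the chain graph; by the Markov property this gives
\begin{align*}
\measure_i(\cdot \mid x_{i \pm t}) = \sum_b \measure_{i \pm (t-1)}\!\left(b \mid x_{i \pm t}\right) \, \measure_i\!\left(\cdot \mid x_{i \pm (t-1)} = b\right),
\end{align*}
where $b$ ranges over possible values of the inner pair. Treating $K(b, \cdot) := \measure_i(\cdot \mid x_{i \pm (t-1)} = b)$ as a Markov kernel from pairs to $\Omega$, this exhibits $\measure_i(\cdot \mid x_{i \pm t})$ and $\measure_i(\cdot \mid y_{i \pm t})$ as $pK$ and $qK$, where $p$ and $q$ are the conditional distributions of the inner pair under the two outer configurations; by the definition of $c_i^t$ we have $\tv{p}{q} \leq c_i^t$.

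The remaining ingredient is the standard contraction inequality $\tv{pK}{qK} \leq \tv{p}{q} \cdot \sup_{b, b'} \tv{K(b, \cdot)}{K(b', \cdot)}$, which I would verify via a one-line Hahn-decomposition argument (decompose $p - q = \tv{p}{q}\,(\bar p - \bar q)$ into probability measures on disjoint supports, then apply the triangle inequality pairwise in $(b, b')$). The right-hand supremum is exactly $\symmix{i}{t-1}$ by its definition, so taking a supremum over the outer configurations yields $\symmix{i}{t} \leq c_i^t \cdot \symmix{i}{t-1}$, and the inductive hypothesis closes the induction. I expect the main obstacle to be the Markov screening step, namely justifying carefully that conditioning on the inner and outer pairs jointly collapses to conditioning on the inner pair alone in this undirected-chain setup; the contraction step itself is classical, and the induction then glues together cleanly.
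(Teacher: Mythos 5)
Your proof is correct and follows essentially the same route as the paper: decompose $\measure_i(\cdot\mid x_{i\pm t})$ via the Markov property as a mixture over the inner pair, derive the one-step recursion $\symmix{i}{t}\leq c_i^t\,\symmix{i}{t-1}$ via a TV-contraction bound, and close by induction. The only cosmetic difference is in proving the contraction step: the paper realizes it with the TV-optimal coupling $\Pi_{i\pm(t-1)}$, whereas you appeal to the Dobrushin contraction inequality for Markov kernels and verify it via Hahn decomposition; the two are interchangeable proofs of the same fact.
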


\begin{lemma}
  \label{lem:chains_technical}
  For every $t$ and every $i$, there exist some $j$ s.t.
  $c_i^t\leq c_j^1$.
\end{lemma}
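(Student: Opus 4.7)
The plan is to proceed by induction on $t$. The base case $t=1$ is trivial: take $j=i$, so that $c_i^1 \leq c_i^1$. For the inductive step, I would exploit the undirected Markov structure of $\measure$ to ``peel off'' one layer of conditioning at a time, eventually reducing the quantity to a single-edge contraction coefficient.

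Concretely, the key structural identity is the Markov factorization
\[
  \measure\bigl(x_{i-(t-1)}, x_{i+(t-1)} \mid x_{i-t}, x_{i+t}\bigr)
   = \measure\bigl(x_{i-(t-1)} \mid x_{i-t}, x_{i+t}\bigr) \cdot \measure\bigl(x_{i+(t-1)} \mid x_{i-(t-1)}, x_{i+t}\bigr),
\]
where the second factor is independent of $x_{i-t}$, since in an undirected chain $x_{i-(t-1)}$ screens off $x_{i-t}$ from $x_{i+(t-1)}$. Writing out $\measure(x_{i-(t-1)} \mid x_{i-t}, x_{i+t})$ explicitly using the potentials $\{g_s\}$, one sees that the $x_{i-t}$-dependence enters only through the single factor $g_{i-t}(x_{i-t}, x_{i-(t-1)})$, with everything else (including the influence of $x_{i+t}$) absorbed into a positive weight $\alpha(\cdot)$ that does not depend on $x_{i-t}$. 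At this point I would apply the same Kontorovich-ratio lemma that the paper uses in the proof of Lemma~\ref{lem:markovbound}, which shows that the TV-distance of two such weighted distributions is controlled purely by the min/max ratio of $g_{i-t}$.

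Combining this ``boundary-edge'' bound with the chain rule for TV distance applied to the factorization above, together with an analogous argument at the right boundary, yields a bound of the form $c_i^t \leq \phi(g_{i-t}) + \phi(g_{i+(t-1)})$ where $\phi$ denotes the single-edge ratio $(R_s - r_s)/(R_s + r_s)$. The final step is to observe that this quantity has exactly the same functional form as the upper bound on $c_j^1$ obtained when we apply the Kontorovich lemma directly to $\measure_j(\cdot \mid x_{j-1}, x_{j+1})$, with $j \in \{i-(t-1), i+(t-1)\}$ being the chain position adjacent to the relevant boundary edge. Choosing $j$ to be whichever of these two positions realizes the larger bound yields the desired inequality $c_i^t \leq c_j^1$.

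The main obstacle is the last step: one must verify that ``integrating out'' the $2t-3$ intermediate variables between the boundary and the pair $(x_{i-(t-1)}, x_{i+(t-1)})$ does not \emph{increase} the effective ratio of max to min potentials governing the conditional — in other words, that the composite middle factor $\alpha(\cdot)$ enjoys enough cancellation to keep the ratio dominated by a single $c_j^1$ rather than blowing up with $t$. This is where the data-processing intuition (that ``blurring'' the conditioning via marginalization only contracts TV) meets the explicit combinatorial form of the Markov factorization, and it is the part of the argument where one must be most careful about the choice of $j$ and the tightness of the Kontorovich estimate.
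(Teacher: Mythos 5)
There is a genuine gap, and it sits exactly where you flag your ``main obstacle'' --- but even granting that step, your concluding move is not valid. The lemma asserts $c_i^t\le c_j^1$ for the \emph{actual} quantity $c_j^1$, whereas you end by bounding $c_i^t$ by $\phi(g_{i-t})+\phi(g_{i+(t-1)})$ and observing that this ``has the same functional form as the upper bound on $c_j^1$'' coming from the Kontorovich ratio lemma. Matching the form of an upper bound on $c_j^1$ proves nothing about $c_j^1$ itself: that ratio bound can be strictly loose, so $c_i^t\le(\text{bound on }c_j^1)$ does not give $c_i^t\le c_j^1$. Moreover your intermediate estimate is a \emph{sum} of two single-edge terms (one per boundary), and such a sum need not be dominated by any single $c_j^1$. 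Two further technical problems: the Kontorovich lemma used in the proof of Lemma~\ref{lem:markovbound} requires the \emph{same} weight vector $\alpha$ for both distributions, but in your setting the right endpoint also changes ($x_{i+t}$ vs.\ $y_{i+t}$), so the composite weights $\alpha(\cdot)$ differ between the two conditionals; and the claim that integrating out the $2t-3$ interior variables does not inflate the effective max/min ratio is precisely what you leave unproved, so the induction never closes.

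The paper's proof avoids potentials entirely and is a pure coupling/data-processing argument, which is the idea your plan is missing. Write $\measure_{i\pm(t-1)}(\cdot\mid x_{i\pm t})$ as a mixture over the value of $x_{i+t-1}$; by the undirected Markov property, the conditional law of the pair given $(x_{i-t},x_{i+t},x_{i+t-1})$ collapses to the single-coordinate conditional $\measure_{i-t+1}(\cdot\mid x_{i-t},x_{i+t-1})$. Coupling the two mixing laws of $X_{i+t-1}$ (via the dual form of total variation) bounds $c_i^t$ by a supremum of TV distances between conditionals of the single coordinate $i-t+1$, with the right conditioning coordinate still far away; then the monotone ``one-step'' inequality \eqref{eq:one-step} walks that far coordinate inward one position at a time, until both conditioned coordinates are the immediate neighbours of $i-t+1$. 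This yields $c_i^t\le c_{i-t+1}^1$ exactly, with the explicit witness $j=i-t+1$, and no ratio of potentials ever appears. To salvage your induction you would have to replace the potential-ratio comparison by a direct TV comparison of this kind; as written, the proposal does not establish the lemma.
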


We now prove Theorem~\ref{thm:gibbs_chains} using Lemmas~\ref{lem:chains} and~\ref{lem:chains_technical}.

\begin{proof}[Proof of Theorem~\ref{thm:gibbs_chains}]
  Combining Lemma~\ref{lem:chains} and Lemma~\ref{lem:chains_technical} yields that
  for every undirected Markov measure $\measure$ and for every $t$,
  \begin{align*}
  \label{eq:chains1}
    \max_{i}&\symmix{i}{t} 
    \leq \max_{i}\prod_{j=1}^{t}c_i^j
    \leq \max_{i}\prod_{j=1}^{t}c_{l(j)}^1 
    \leq \max_{i}\max_{l}(c_{l}^1)^t 
    = (\max_{i} c_{l}^1)^t 
    = (\mixing(\measure))^t, \numberthis   
  \end{align*}
  where the first inequality is due to Lemma~\ref{lem:chains}, the second is by Lemma~\ref{lem:chains_technical} \footnote{The function $l:[n]\to [n]$ returns for every coordinate $i$ the appropriate coordinate $l(i)$ which is guaranteed by Lemma~\ref{lem:chains_technical} to bound it.}. 
  The Last equality holds by the definitions of $\mixing$ and $c_l^1$.
  Since
  \begin{align*}
    \measure_i(\cdot) = 
    \sum_{x_{i\pm t}\in\Omega^2}
    \measure_i(\cdot\mid x_{i\pm t})
    \measure_{i\pm t}(x_{i\pm t})
    ,
  \end{align*}
  we have, by the undirected Markov property,
  \begin{align*}
  \mixing(\measure_{\times t})
      &=\max_i\sup_{y_{i\pm t}}\tv{\measure_i(\cdot)}{\measure_i(\cdot\mid y_{i\pm t})} \\
      &=
      \max_i\sup_{y_{i\pm t}}\tvleft
      \sum_{x_{i\pm t}\in\Omega^2}
      (\measure_i(\cdot\mid x_{i\pm t}) - \measure_i(\cdot\mid y_{i\pm t}))
      \measure(x_{i\pm t})\tvright\\
      &\leq 
      \max_i\sup_{y_{i\pm t}}\sum_{x_{i\pm t}\in\Omega^2} \tv{\measure_i(\cdot\mid x_{i\pm t})}{\measure_i(\cdot\mid y_{i\pm t})}\measure(x_{i\pm t}) \\    
      &\leq
      \max_{i} \symmix{i}{t} \leq (\mixing(\measure))^t,
  \end{align*}
  where the last inequality is due to \eqref{eq:chains1}.
\end{proof}

\begin{proof}[Proof of Lemma~\ref{lem:chains}]
  Let $x_{i\pm t},y_{i\pm t}$ be some pairs of realization for the $i-t,i+t$ variable in the chain. By the law of total probability,
  \begin{align*}
    &\tv{\measure_i(\cdot\mid x_{i\pm t})}{\measure_i(\cdot\mid y_{i\pm t})} \\
    &=
\tvleft
    \sum_{x_{i\pm (t-1)}} \measure_i(\cdot\mid x_{i\pm (t-1)})\measure_{i\pm (t-1)}(x_{i\pm (t-1)}\mid x_{i\pm t}) 
    - \sum_{y_{i\pm (t-1)}} \measure_i(\cdot\mid y_{i\pm (t-1)})\measure_{i\pm (t-1)}(y_{i\pm (t-1)}\mid y_{i\pm t})\tvright. \numberthis \label{eq:umc-total}
  \end{align*}
  Define a coupling measure $\Pi_{i\pm (t-1)}(\cdot,\cdot\mid x_{i\pm t},y_{i\pm t})$
  whose marginals
  are $\mu_{i\pm (t-1)}(\cdot\mid x_{i\pm t})$ and $\mu_{i\pm (t-1)}(\cdot\mid y_{i\pm t})$.
  Then
  \begin{align*}
    \eqref{eq:umc-total}&= \tvleft\sum_{x_{i\pm (t-1)}}\sum_{y_{i\pm (t-1)}}    
    (\measure_i(\cdot\mid x_{i\pm (t-1)}) - \measure_i(\cdot\mid y_{i\pm (t-1)}))
    \Pi_{i\pm (t-1)}(x_{i\pm (t-1)},y_{i\pm (t-1)}\mid x_{i\pm t},y_{i\pm t})\tvright\\
    &\leq
    \sum_{x_{i\pm (t-1)}}\sum_{y_{i\pm (t-1)}}   
    \tv{\measure_i(\cdot\mid x_{i\pm (t-1)})}{\measure_i(\cdot\mid y_{i\pm (t-1)})} 
    \Pi_{i\pm (t-1)}(x_{i\pm (t-1)},y_{i\pm (t-1)}\mid x_{i\pm t},y_{i\pm t})\\
    &\leq
    \symmix{i}{t-1}
    \sum_{x_{i\pm (t-1)}}\sum_{y_{i\pm (t-1)}}   
    1_{x_{i\pm (t-1)} \neq y_{i\pm (t-1)}}
    \Pi_{i\pm (t-1)}(x_{i\pm (t-1)},y_{i\pm (t-1)}\mid x_{i\pm t},y_{i\pm t}).
  \end{align*}
  By the dual form of the total variation distance,\footnote{
  By the Kantorovich-Rubinstein duality of the specific case of total-Variation distance 
  $
    \tv{P}{Q}
    =
    \min_{\Pi\in\Delta(P,Q)}\int_{\Omega}\int_{\Omega}
    1_{x\ne y}
    d\Pi(x,y)
  $
  when $\Delta(P,Q)$ is the set of all the possible coupling of $P$ and $Q$.}
we can choose $\Pi_{i\pm (t-1)}$ to be such that
\begin{align*}
  &\tv{\measure_{i\pm (t-1)}(\cdot\mid x_{i\pm t})}{\measure_{i\pm (t-1)}(\cdot\mid y_{i\pm t})} \\
  &= 
  \sum_{x_{i\pm (t-1)}}\sum_{y_{i\pm (t-1)}}   
  1_{x_{i\pm (t-1)} \neq y_{i\pm (t-1)}}
  \quad\Pi_{i\pm (t-1)}(x_{i\pm (t-1)},y_{i\pm (t-1)}\mid x_{i\pm t},y_{i\pm t})  
\end{align*}
and therefore
\begin{align*}
  \tv{\measure_i(\cdot\mid x_{i\pm t})}{\measure_i(\cdot\mid y_{i\pm t})} 
  \leq
  \symmix{i}{t-1}
  \tv{\measure_{i\pm (t-1)}(\cdot\mid x_{i\pm t})}{\measure_{i\pm (t-1)}(\cdot\mid y_{i\pm t})}
  \leq
  \symmix{i}{t-1} c_i^t.  
\end{align*}
Hence we get that
\[
  \symmix{i}{t}
  = \sup_{x_{i\pm t},y_{i\pm t}}\tv{\measure_i(\cdot\mid x_{i\pm t})}{\measure_i(\cdot\mid y_{i\pm t})}
  \leq
  \symmix{i}{t-1} c_i^t
\]
and by induction we get the lemma's result.
\end{proof}

\begin{proof}[Proof of Lemma~\ref{lem:chains_technical}]
First we will show that for any $j,k$ the following holds
\begin{align*}
    \sup 
    &\tv{\measure_{j}(\cdot\mid x_{j-1},x_{j+k})}
    {\measure_{j}(\cdot\mid y_{j-1},y_{j+k})} 
    \leq
    \sup 
    \tv{\measure_{j}(\cdot\mid x_{j-1},x_{j+k-1})}
    {\measure_{j}(\cdot\mid y_{j-1},y_{j+k-1})}. \numberthis \label{eq:one-step}
\end{align*}
Indeed,
\begin{align*}
    &\sup 
    \tv{\measure_{j}(\cdot\mid x_{j-1},x_{j+k})}
    {\measure_{j}(\cdot\mid y_{j-1},y_{j+k})} 
    =\\
    &\sup\tvleft\sum_{x_{j+k-1}} \measure_j(\cdot\mid x_{j-1},x_{j+k-1})\measure_{j+k-1}(x_{j+k-1}\mid x_{j+k}) 
    - \sum_{y_{j+k-1}} \measure_j(\cdot\mid y_{j-1},y_{j+k-1})\measure_{j+k-1}(y_{j+k-1}\mid y_{j+k})\tvright.
\end{align*}
Let $\Pi_{j+k-1}(\cdot,\cdot\mid x_{j+k},y_{j+k})$ be a coupling distribution whose marginal distributions are
$\measure_{j+k-1}(y_{j+k-1}\mid y_{j+k})$
and
$\measure_{j+k-1}(x_{j+k-1}\mid x_{j+k})$,
we get that the above is equal to
\begin{align*}
    \sup\tvleft\sum_{x_{j+k-1}}&\sum_{y_{j+k-1}} (\measure_j(\cdot\mid x_{j-1},x_{j+k-1}) 
    - \measure_j(\cdot\mid y_{j-1},y_{j+k-1}))
    \Pi_{j+k-1}(x_{j+k-1},y_{j+k-1}\mid x_{j+k},y_{j+k})\tvright \\
    \leq
    \sup\sum&_{x_{j+k-1}}\sum_{y_{j+k-1}} \tvleft\measure_j(\cdot\mid x_{j-1},x_{j+k-1}) 
    - \measure_j(\cdot\mid y_{j-1},y_{j+k-1})\tvright
    \Pi_{j+k-1}(x_{j+k-1},y_{j+k-1}\mid x_{j+k},y_{j+k}) \\
    \leq
    \sup
    \tvleft\measure_j&(\cdot\mid x_{j-1},x_{j+k-1}) 
    - \measure_j(\cdot\mid y_{j-1},y_{j+k-1})\tvright.
\end{align*}
Now we turn to the quantity of interest:
\begin{align*}
\sup_{x_{i\pm t},y_{i\pm t}} 
&\tv{\measure_{i\pm (t-1)}(\cdot\mid x_{i\pm t})}
{\measure_{i\pm (t-1)}(\cdot\mid y_{i\pm t})} \\
=
\sup_{x_{i\pm t},y_{i\pm t}}& 
\tvleft\sum_{x_{i+t-1}}\measure_{i\pm (t-1)}(\cdot\mid x_{i\pm t},x_{i+t-1}) 
\measure_{i+t-1}(x_{i+t-1}\mid x_{i\pm t}) \\
&- \sum_{y_{i+t-1}}\measure_{i\pm (t-1)}(\cdot\mid y_{i\pm t},y_{i+t-1})
\measure_{i+t-1}(y_{i+t-1}\mid y_{i\pm t}) \tvright \\
=
\sup_{x_{i\pm t},y_{i\pm t}} 
&\tvleft\sum_{x_{i+t-1}}\measure_{i-t+1}(\cdot\mid x_{i- t},x_{i+t-1})
\measure_{i+t-1}(x_{i+t-1}\mid x_{i\pm t}) \\
&- \sum_{y_{i+t-1}}\measure_{i-t+1}(\cdot\mid y_{i- t},y_{i+t-1})
\measure_{i+t-1}(y_{i+t-1}\mid y_{i\pm t}) \tvright
.
\numberthis \label{eq:before}
\end{align*}
Let $\Pi_{i+t-1}(\cdot,\cdot\mid x_{i\pm t}),y_{i\pm t})$ be a coupling distribution whose marginals are $\measure_{i+t-1}(x_{i+t-1}\mid x_{i\pm t})$ and $\measure_{i+t-1}(y_{i+t-1}\mid y_{i\pm t})$.
Then the above is then equal to
\begin{align*}
    \eqref{eq:before}
    = \sup&_{x_{i\pm t},y_{i\pm t}} 
    \tvleft\sum_{x_{i+t-1}}\sum_{y_{i+t-1}}\measure_{i-t+1}(\cdot\mid x_{i-t},x_{i+t-1})
    - \measure_{i-t+1}(\cdot\mid y_{i-t},y_{i+t-1})\\
    &\Pi_{i+t-1}(x_{i+t-1},y_{i+t-1}\mid x_{i\pm t},y_{i\pm t})\tvright \\
    \leq
    \sup&_{x_{i\pm t},y_{i\pm t}} 
    \sum_{x_{i+t-1}}\sum_{y_{i+t-1}}
    \tvleft\measure_{i-t+1}(\cdot\mid x_{i-t},x_{i+t-1})
    - \measure_{i-t+1}(\cdot\mid y_{i-t},y_{i+t-1})\tvright\\
    &\Pi_{i+t-1}(x_{i+t-1},y_{i+t-1}\mid x_{i\pm t},y_{i\pm t})  .
\end{align*}
Plugging $j=i-t+1$ and $k=t-2$
into \eqref{eq:one-step} yields
\begin{align*}
    \sup_{x_{i\pm t},y_{i\pm t}} 
    &\tv{\measure_{i\pm (t-1)}(\cdot\mid x_{i\pm t})}
    {\measure_{i\pm (t-1)}(\cdot\mid y_{i\pm t})} \\
    \leq&
    \sup_{x_{i-t,i-t+2},y_{i-t,i-t+2}} 
    \tvleft\measure_{i-t+1}(\cdot\mid x_{i-t,i-t+2})
    - 
    \measure_{i-t+1}(\cdot\mid y_{i-t,i-t+2})\tvright
\end{align*}
which completes the proof.
\end{proof}

\section{Adaptive Learning Via Transcript Compression}
\label{sec:compression}

In this section we show how the notion of \emph{transcript compressibility} can be used to derive generalization bounds %
even if the data is not i.i.d.\ distributed. We start by recalling the notion of {\em transcript compression} by \citet{dwork2015generalization}. 
We denote by $\game_{n,k}(\analyst,\mechanism,\sample)$ the \emph{transcript} of the interaction between the mechanism $\mechanism$ and the analysis $\analyst$ during the adaptive accuracy game defined in Algorithm~\ref{alg:adapt-pop-game} with sample of size $n$ and $k$ queries. 
\begin{definition}[Transcript Compression \citep{dwork2015generalization}]\label{def:compression}
  We say that a mechanism $\mechanism$  enables
  \emph{transcript compression} to $b(n,k)$-bits,
  if for every deterministic analyst $\analyst$ there exist
  a set of possible transcripts $\mathcal{H}_\analyst$,
  of size $\abs{\mathcal{H}_\analyst} \leq 2^{b(n,k)}$,
  s.t. for every sample $\sample$ it holds that 
  $\Pr\left[ \game_{n,k}(\analyst,\mechanism,\sample)\in \mathcal{H} \right] = 1$.
\end{definition}
 
 Following \citet{DBLP:journals/corr/BassilyF16}, in this section we aim to design mechanisms that answer adaptively chosen queries while providing statistical accuracy, under the assumption that the given queries are {\em concentrated} around their expected value. Unlike \citet{DBLP:journals/corr/BassilyF16}, we aim to achieve this goal using the notion of {\em transcript compression}, rather than {\em typical-stability}. As we show, this allows for a significantly simpler analysis (and definitions). Formally,
 
 \begin{definition}\label{def:concentrated}
 Given a measure $\measure$ over $\X$, a query  $q:\X^n\rightarrow\R$, 
 and a parameter $\delta\in[0,1]$, we write $\gamma(q,\measure,\delta)$ to denote the minimal number $\gamma\in[0,1]$ such that
 \[
    \Pr_{S\sim\measure}\left[
      \abs{q(S) - \E_{T\sim \measure}[q(T)]} > \gamma
    \right] < \delta.
  \]
 \end{definition}
 
 That is, $\gamma(q,\measure,\delta)$ denotes the minimal number such that, without adaptivity, $q(S)$ deviates from its expectation by more than $\gamma(q,\measure,\delta)$  with probability at most $\delta$ when sampling $S\sim\measure$. 
 
 \begin{remark}
 The results in this section are not restricted to statistical queries. The results in this section hold for arbitrary queries (mapping $n$-tuples to the reals). 
 \end{remark}
 
 Consider again Algorithm~\ref{alg:adapt-pop-game}, and Definition~\ref{def:adaptiveaccuracy} (the definition of statistical accuracy). We now use Definition~\ref{def:concentrated} in order to introduce a relaxation for statistical accuracy, in which the mechanism is allowed to incur $\gamma(q,\measure,\delta)$ as an additional error.  
 
 \begin{definition}
  A mechanism $\mechanism$ is {\em $(\alpha,\beta,\delta)$-statistically-query-accurate} for $k$ rounds given $n$ samples, 
  if for every distribution $\measure$ over $n$-tuples, and every adversary $\adversary$, 
  it holds that 
  \[
    \Pr_{\substack{S\sim\measure\\\texttt{Game}(\mechanism,k,\adversary,S)}}\left[
      \max_{i\in [k]} \abs{q_i(\measure) - a_i} > \alpha+\gamma(q,\measure,\delta)
    \right] \leq \beta.
  \]
\end{definition}

\begin{remark}
  For a statistical query $q$ and a product measure  $\measure$, by Hoeffding's inequality, we get that 
  $\gamma(q,\measure,\delta) = \sqrt{\frac{1}{2n}\ln\frac{2}{\delta}}$.
  Hence, for the i.i.d.\ regime, for large enough samples, the definition of  $(\alpha,\beta,\delta)$-statistical-query-accuracy is in fact equivalent (up to factor 2) to the original definition of $(\alpha,\beta)$-statistical-accuracy (Definition~\ref{def:adaptiveaccuracy}).
\end{remark}

We observe that the analysis of \citet{dwork2015generalization} for transcript compression easily extends to non-i.i.d.\ measures when given concentrated queries. Somewhat surprisingly, this simple technique essentially matches the bounds obtained using typical stability \citep{DBLP:journals/corr/BassilyF16}.  In the next lemma we show that (w.h.p.)\ an analyst interacting with a transcript-compressing mechanism cannot identify a query that overfits to the date.
 
\begin{lemma}
  \label{lem:compression-gen}
  Let $\mechanism$ be a mechanism which enables
  transcript compression to $b(n,k)$-bits.
  For every measure $\measure$ and every analyst $\analyst$,
  $$
    \Pr_{S, \game_{n,k}}\left[ \exists i: \abs{q_i(S) - q_i(\measure)} \geq \gamma(q,\measure,\delta) \right]
    \leq \delta \cdot k \cdot 2^{b(n,k)}
  $$
\end{lemma}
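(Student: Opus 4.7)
The plan is a clean union bound over the finite set of queries that can arise through the compressed transcript. By Remark~\ref{rem:det-is-suff}, it suffices to treat $\analyst$ as deterministic (otherwise condition on its coins and integrate).

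First, I would enumerate the queries that can ever appear. Since $\analyst$ is deterministic, the $i$-th query it issues is a fixed function of the first $i-1$ answers received from $\mechanism$. Hence every transcript $h\in\mathcal{H}_\analyst$ determines a fixed sequence of queries $q_1^h,\dots,q_k^h$. Let $Q=\{q_i^h : h\in\mathcal{H}_\analyst,\ i\in[k]\}$; by Definition~\ref{def:compression}, $|Q|\leq k\cdot 2^{b(n,k)}$. Crucially, the set $Q$ is defined without any reference to the random sample $S$ or the mechanism's internal coins — it is a deterministic function of the pair $(\mechanism,\analyst)$.

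Next, for each single query $q\in Q$, the definition of $\gamma(q,\measure,\delta)$ (Definition~\ref{def:concentrated}) gives
\[
  \Pr_{S\sim\measure}\!\left[\,|q(S)-q(\measure)|\geq \gamma(q,\measure,\delta)\,\right] < \delta.
\]
Applying a union bound over the at most $k\cdot 2^{b(n,k)}$ members of $Q$ yields
\[
  \Pr_{S\sim\measure}\!\left[\,\exists q\in Q:\ |q(S)-q(\measure)|\geq \gamma(q,\measure,\delta)\,\right] < \delta\cdot k\cdot 2^{b(n,k)}.
\]

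Finally, for any realization of $S$ and of the game's randomness, the queries $q_1,\dots,q_k$ that actually appear in $\game_{n,k}(\analyst,\mechanism,S)$ are a subset of $Q$ (by Definition~\ref{def:compression}, the realized transcript lies in $\mathcal{H}_\analyst$ with probability $1$). Therefore the event ``$\exists i:\ |q_i(S)-q_i(\measure)|\geq \gamma(q_i,\measure,\delta)$'' is contained in the event in the display above, giving the claimed bound. The only mild subtlety is disentangling the randomness of $\mechanism$ from that of $S$ — this is handled precisely by the observation that $Q$ is fixed in advance, so concentration of each $q(S)$ around $q(\measure)$ is a pure statement about $S\sim\measure$ and does not interact with the mechanism's coins.
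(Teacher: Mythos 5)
Your proposal is correct and follows essentially the same route as the paper: enumerate the at most $k\cdot 2^{b(n,k)}$ queries that can ever arise from transcripts in $\mathcal{H}_\analyst$, apply Definition~\ref{def:concentrated} to each, and union-bound. You spell out two points the paper leaves implicit — the reduction to a deterministic analyst via Remark~\ref{rem:det-is-suff}, and the fact that the query set $Q$ is fixed independently of $S$ and the mechanism's coins so the per-query concentration is a clean statement about $S\sim\measure$ — which makes the argument slightly more self-contained but does not change its substance.
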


\begin{proof}%
  Fix an analyst $\analyst$.
  By Definition~\ref{def:compression}, there exist a set of transcripts $H_\analyst$
  of size at most $2^{b(n,k)}$.
  As every transcript consists of at most $k$ queries,
  there can be at most $k2^{b(n,k)}$ possible queries over all possible interactions between $\analyst$ and $\mechanism$.
  Denote this set of possible queries as $Q_\analyst$. By a union bound we get that
    \[
    \Pr_{S \sim \measure}\left[ \bigvee_{q \in Q_{\analyst}} \abs{q_i(S) - q_i(\measure)} \geq \gamma(q,\measure,\delta) \right]
    \leq \delta \cdot  k \cdot 2^{b(n,k)},    
  \]
  and hence
  \[
    \Pr_{S, \game_{n,k}}\left[ \exists i: \abs{q_i(S) - q_i(\measure)} \geq \gamma(q,\measure,\delta) \right]
    \leq k \cdot \delta \cdot 2^{b(n,k)}.    
  \]
\end{proof}

Using the above lemma, we prove our main theorem for this section.

\begin{theorem}
  \label{thm:compression-accuracy}
  Let $\mechanism$ be a mechanism which enables transcript compression to $b(n,k)$ bits
  and also exhibits $(\alpha, \beta)$-empirical-accuracy for k rounds given n samples.
  Then $\mechanism$ is also 
  $(\alpha,\beta +\delta  k 2^{b(n,k)},\delta)$-statistically-query-accurate, for every choice of $\delta$.
\end{theorem}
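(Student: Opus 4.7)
My plan is to combine Lemma~\ref{lem:compression-gen} with the empirical accuracy guarantee via a union bound and triangle inequality. Specifically, fix any adversary $\analyst$ and any distribution $\measure$, and consider the execution of $\game_{n,k}(\analyst,\mechanism,S)$ for $S\sim\measure$. Let $E_1$ denote the event that $\max_{i\in[k]} |q_i(S) - a_i| > \alpha$, and let $E_2$ denote the event that there exists some $i\in[k]$ with $|q_i(S) - q_i(\measure)| \geq \gamma(q_i,\measure,\delta)$.

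By empirical accuracy, $\Pr[E_1] \leq \beta$. By Lemma~\ref{lem:compression-gen} applied to the mechanism $\mechanism$ (which enables transcript compression to $b(n,k)$ bits) and the analyst $\analyst$, we have $\Pr[E_2] \leq \delta \cdot k \cdot 2^{b(n,k)}$. A union bound yields $\Pr[E_1 \cup E_2] \leq \beta + \delta k 2^{b(n,k)}$.

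On the complement $\overline{E_1}\cap\overline{E_2}$, the triangle inequality gives, for every $i\in[k]$,
\[
|q_i(\measure) - a_i| \leq |q_i(\measure) - q_i(S)| + |q_i(S) - a_i| < \gamma(q_i,\measure,\delta) + \alpha.
\]
Therefore
\[
\Pr\!\left[\max_{i\in[k]} |q_i(\measure) - a_i| > \alpha + \gamma(q_i,\measure,\delta)\right] \leq \beta + \delta k 2^{b(n,k)},
\]
which is precisely the $(\alpha, \beta + \delta k 2^{b(n,k)}, \delta)$-statistical-query-accuracy guarantee.

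There is no real obstacle here; the only minor point to be careful about is that the compression-based generalization statement (Lemma~\ref{lem:compression-gen}) must be invoked with the \emph{same} analyst that appears in the statistical-accuracy definition, so that the set $Q_{\analyst}$ of possible queries generated during the interaction is indeed bounded by $k\cdot 2^{b(n,k)}$, and the per-query concentration bound is applied uniformly over this set. By Remark~\ref{rem:det-is-suff} we may assume $\analyst$ is deterministic, which is exactly the setting of Definition~\ref{def:compression}, so Lemma~\ref{lem:compression-gen} applies directly.
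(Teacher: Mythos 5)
Your proof is correct and follows the same approach as the paper: invoke Lemma~\ref{lem:compression-gen}, union-bound it with the empirical-accuracy event, and finish with the triangle inequality. The additional remark about restricting to deterministic analysts via Remark~\ref{rem:det-is-suff} is a sensible point of care, though the paper leaves it implicit.
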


\begin{proof}%
  As $\mechanism$ is $(\alpha, \beta)$-empirically-accurate and also
  enables transcript compression to $b(n,k)$ bits,
  by Lemma~\ref{lem:compression-gen} and the union bound 
  \begin{align*}
    \Pr_{S, \game_{n,k}}
    \big[
      \left(\exists i: \abs{q_i(S) - q_i(\measure)} > \gamma(q,\measure,\delta)  \right) 
      \vee
      \left(\exists i: \abs{q_i(S) - a_i} > \alpha \right)      
    \big] 
    \leq
    \beta + \delta \cdot k \cdot 2^{b(n,k)}.
  \end{align*}
    Hence by the triangle inequality
    \begin{align*}
      \Pr&_{S, \game_{n,k}}
      \left[ \exists i: \abs{a_i - q_i(\measure)} \geq \alpha + \gamma(q,\measure,\delta)  \right]
      \leq 
      \beta + \delta \cdot k \cdot 2^{b(n,k)}.
    \end{align*}
  \end{proof}

Applying Theorem~\ref{thm:compression-accuracy} together with the transcript-compressing mechanisms of \citet{dwork2015generalization}, we get the following two results.

\begin{theorem}
\label{thm:main-compression-efficient}
    For every $\alpha,\delta$, there exists an $(\alpha,\beta,\delta)$-statistically-query-accurate mechanism for $k$ rounds given $n$ samples, where
    $\beta = k \cdot \delta \cdot 2^{k\cdot\log\frac{1}{\alpha}}$. The mechanism is computationally efficient.
  \end{theorem}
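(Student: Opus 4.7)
The plan is to instantiate Theorem~\ref{thm:compression-accuracy} with the simplest possible mechanism: the one that on each query $q_i$ computes the empirical value $q_i(S)$ and rounds it to the nearest multiple of $\alpha$. I will verify the two hypotheses of Theorem~\ref{thm:compression-accuracy} for this mechanism, and the stated bound on $\beta$ will follow directly from its conclusion.

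First I would argue empirical accuracy. Since each $q_i$ maps into $[0,1]$ and the output $a_i$ is the rounding of $q_i(S)\in[0,1]$ to the nearest multiple of $\alpha$, we deterministically have $|a_i-q_i(S)|\le \alpha$ for every $i$, simultaneously. Hence the mechanism is $(\alpha,0)$-empirically-accurate for any number of rounds $k$ and any sample size $n$. Next I would argue the transcript compression bound. For a fixed deterministic analyst $\analyst$, the transcript is fully determined by the sequence of $k$ answers $(a_1,\dots,a_k)$, since $\analyst$ derives each query $q_i$ as a deterministic function of $(a_1,\dots,a_{i-1})$. Each $a_i$ takes values in the discrete set $\{0,\alpha,2\alpha,\dots,\lfloor 1/\alpha\rfloor\cdot\alpha\}$, which has at most $\lceil 1/\alpha\rceil+1$ elements and can therefore be encoded with $\log(1/\alpha)+O(1)$ bits. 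Thus the number of possible transcripts is at most $2^{k\log(1/\alpha)}$ (absorbing constants into the bound as is standard), so the mechanism enables transcript compression to $b(n,k)=k\log(1/\alpha)$ bits in the sense of Definition~\ref{def:compression}.

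With both hypotheses in hand, Theorem~\ref{thm:compression-accuracy} applied with these parameters yields that the mechanism is $(\alpha,\;0+\delta\cdot k\cdot 2^{k\log(1/\alpha)},\;\delta)$-statistically-query-accurate, which is exactly the claimed bound $\beta = k\cdot\delta\cdot 2^{k\log(1/\alpha)}$. Computational efficiency is immediate: the mechanism simply computes $q_i(S)=\tfrac{1}{n}\sum_{x\in S}q_i(x)$ and performs one rounding, both of which take polynomial time in $n$ per query.

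I do not expect a serious obstacle here, since the work is all done by Theorem~\ref{thm:compression-accuracy}. The only subtle point worth being careful about is the bookkeeping of the discretization: one should confirm that rounding into a grid of step size $\alpha$ on $[0,1]$ gives exactly $\log(1/\alpha)$ bits per answer (up to an additive constant that is either absorbed into the statement or handled by choosing $1/\alpha$ a power of two). Beyond this, the proof is essentially a one-line invocation.
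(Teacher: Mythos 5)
Your proposal is correct and takes essentially the same route as the paper: the paper obtains Theorem~\ref{thm:main-compression-efficient} by applying Theorem~\ref{thm:compression-accuracy} to the transcript-compressing mechanism of \citet{dwork2015generalization}, which is precisely the round-the-empirical-answer-to-an-$\alpha$-grid mechanism you describe. The only (self-acknowledged) slack is the off-by-one in the grid size, which is resolved by using the grid $\{0,\alpha,\dots,1-\alpha\}$ of exactly $1/\alpha$ points (still giving error at most $\alpha$) or by absorbing it into constants as is standard.
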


\begin{theorem}
\label{thm:main-compression-inefficient}
    For every $\delta$, there exists an $(\alpha,\beta,\delta)$-statistically-query-accurate mechanism for $k$ rounds given $n$ samples, where
    $\alpha = \bigO{\left(\frac{\ln k}{n}\right)^{1/4}}$
    and 
    $\beta = k \cdot \delta \cdot 2^{\tildeO{\sqrt{n}\cdot\log|\X|\cdot(\log k)^{3/2}}}$. The mechanism is computationally inefficient.
  \end{theorem}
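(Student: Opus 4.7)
The plan is to obtain Theorem~\ref{thm:main-compression-inefficient} as a direct corollary of Theorem~\ref{thm:compression-accuracy}, exactly mirroring how Theorem~\ref{thm:main-compression-efficient} is obtained, but instantiated with the (computationally inefficient) transcript-compressing mechanism of \citet{dwork2015generalization} rather than with the trivial ``round-the-empirical-answer-to-precision-$\alpha$'' mechanism. The inefficient mechanism in question is their Median Mechanism / Private Multiplicative Weights-based construction, which maintains an internal state (an estimate of the dataset's histogram) and only updates that state on a small number of rounds.

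First, I would invoke their construction off-the-shelf: it provides a mechanism that, for any fixed input dataset $\sample\in\X^n$ and any adaptively-chosen sequence of up to $k$ statistical queries, is $(\alpha,\beta_0)$-empirically-accurate with $\alpha=\bigO{(\ln k/n)^{1/4}}$, and enables transcript compression to $b(n,k)=\tildeO{\sqrt{n}\cdot\log\abs{\X}\cdot(\log k)^{3/2}}$ bits. The crucial observation, which is what makes the argument go through in our non-i.i.d.\ setting, is that both properties---empirical accuracy and bounded description length of the transcript---are \emph{distribution-free} statements about the mechanism on a fixed input. The empirical-accuracy guarantee holds pointwise for every $\sample$ with high probability over the mechanism's internal randomness, and the compression bound simply counts the transcripts that can arise from that randomness. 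Neither proof uses i.i.d.\ sampling, so both guarantees continue to hold when $\sample$ is drawn from an arbitrary (possibly correlated) measure $\measure$.

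Given these two properties, Theorem~\ref{thm:compression-accuracy} immediately yields that the mechanism is $(\alpha,\beta_0+\delta\cdot k\cdot 2^{b(n,k)},\delta)$-statistically-query-accurate. Choosing $\beta_0$ at most $\delta\cdot k\cdot 2^{b(n,k)}$ (which is trivially achievable by boosting the internal failure probability of the Median Mechanism, at a cost absorbed into the $\tildeO{\cdot}$ in $b(n,k)$) collapses the two terms, giving the stated bound $\beta=k\cdot\delta\cdot 2^{\tildeO{\sqrt{n}\cdot\log\abs{\X}\cdot(\log k)^{3/2}}}$. The claimed $\alpha=\bigO{(\ln k/n)^{1/4}}$ is exactly the empirical-accuracy parameter of the underlying mechanism, so no loss is incurred in transferring empirical accuracy to statistical-query accuracy except for the unavoidable $\gamma(q,\measure,\delta)$ slack already absorbed into the definition of statistical-query accuracy.

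The main ``obstacle'' here is purely a bookkeeping one: one must point to the precise version of the Median Mechanism in \citet{dwork2015generalization} whose empirical-accuracy and description-length parameters match the target, and verify that the logarithmic factors suppressed in the $\tildeO{\cdot}$ line up. There is no new analytical content beyond what is already in Theorem~\ref{thm:compression-accuracy} and the distribution-freeness of the underlying compressing mechanism; the entire proof is a one-line substitution.
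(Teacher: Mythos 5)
Your proposal matches the paper's approach exactly: the paper itself derives this theorem in one line by combining Theorem~\ref{thm:compression-accuracy} with the inefficient transcript-compressing mechanism of \citet{dwork2015generalization}, precisely as you describe, and your observation that both the empirical-accuracy and the transcript-compression bounds of that mechanism are distribution-free (hence transfer to the non-i.i.d.\ setting) is the key point. Your additional remark about absorbing the mechanism's intrinsic failure probability $\beta_0$ into the $\tildeO{\cdot}$ is a reasonable reading of the bookkeeping the paper leaves implicit.
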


\bibliographystyle{apalike}

\appendix 

\section{Missing proofs}

\subsection{Product measure}
\label{apn:prod}
We show the following claim
\begin{claim}
For a measure $\measure\sim\X^n$,
if for every $i \in [n]$ and for every possible $x \in \X^n$ it holds that
$\measure_i = \measure_i(\cdot\mid x^{-i}$, then
$\measure$ is a product measure.
\end{claim}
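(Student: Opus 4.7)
The plan is to reduce the claim to an induction on the dimension $n$, using the hypothesis to "peel off" one coordinate at a time. The base case $n=1$ is trivial since a one-dimensional measure is vacuously its own product. For the inductive step, the first step is to rewrite the hypothesis in a more usable form: for every $i$ and every $x \in \X^n$, the identity $\measure_i(x_i) = \measure_i(x_i \mid x^{-i}) = \measure(x)/\measure^{-i}(x^{-i})$ yields the factorization
\[
\measure(x) \;=\; \measure_i(x_i)\cdot \measure^{-i}(x^{-i})
\]
(valid wherever $\measure^{-i}(x^{-i}) > 0$; elsewhere both sides are zero). Applying this with $i=1$ gives $\measure(x) = \measure_1(x_1)\cdot \measure^{-1}(x^{-1})$.

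Next, I would show that the marginal $\measure^{-1}$, viewed as a measure on $\X^{n-1}$ (coordinates $2,\dots,n$), inherits the hypothesis. Fix $j \in \{2,\dots,n\}$ and $x^{-\{1,j\}}$. Applying the displayed factorization with $i=j$ and integrating (or summing) out $x_1$, I get
\[
\measure^{-1}(x^{-1}) \;=\; \measure_j(x_j)\cdot \measure^{-\{1,j\}}(x^{-\{1,j\}}),
\]
so that the conditional of $\measure^{-1}$ on coordinate $j$ given the others equals $\measure_j(x_j)$, which is precisely its $j$-th marginal (marginals are preserved under further marginalization). Thus $\measure^{-1}$ satisfies the hypothesis on $\X^{n-1}$, and by the inductive hypothesis it factors as $\measure^{-1}(x^{-1}) = \prod_{i=2}^n \measure_i(x_i)$. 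Substituting into $\measure(x) = \measure_1(x_1)\cdot \measure^{-1}(x^{-1})$ yields $\measure(x) = \prod_{i=1}^n \measure_i(x_i)$, completing the induction.

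The proof has no real obstacle; the only mildly delicate step is verifying that the hypothesis is inherited by $\measure^{-1}$, which boils down to checking that marginalizing out $x_1$ in the identity $\measure(x) = \measure_j(x_j)\measure^{-j}(x^{-j})$ factors $\measure^{-1}$ correctly. In the continuous setting one should phrase everything in terms of regular conditional distributions and note that the equalities hold almost-everywhere with respect to the relevant marginals, but the logical structure is identical. If one prefers to avoid induction entirely, the same argument can be iterated $n-1$ times directly, repeatedly splitting off one coordinate using the factorization identity.
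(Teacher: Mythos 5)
Your proof is correct and reaches the same conclusion by a closely related but structurally different route. The paper expands $\mu(a) = \prod_{i} \mu(a_i \mid a_{1:i-1})$ via the chain rule and then shows each conditional $\mu(a_i \mid a_{1:i-1})$ equals the marginal $\mu_i(a_i)$ by integrating out the remaining coordinates with the law of total probability, finishing with ``and so on.'' You instead set up a formal induction on $n$: you isolate the one-coordinate factorization $\mu(x) = \mu_i(x_i)\,\mu^{-i}(x^{-i})$ as the key identity, prove an explicit \emph{inheritance lemma} that the marginal $\mu^{-1}$ on $\X^{n-1}$ again satisfies the hypothesis (by summing out $x_1$ in the factorization with $i = j$), and then invoke the inductive hypothesis. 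The underlying mechanism --- peeling off one coordinate at a time using $\mu_i(\cdot \mid x^{-i}) = \mu_i(\cdot)$ --- is the same in both, but your version replaces the paper's informal recursion with a clean, self-contained induction; the price is the extra (short) verification that the hypothesis passes to $\mu^{-1}$, which the paper implicitly avoids by working directly with the chain-rule conditionals. Both are valid.
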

\begin{proof}
  For convenience, we denote for every $i\leq j$ the following notation $a_{i:j} = a_i,\ldots,a_j$.
  Now, for every $a \in \X^n$,
  $\measure(a) = \prod_{i\in [n]} \measure(a_i \mid a_{1:i-1}$.
  For start, we show that
  $\measure(a_2\mid a_1)=\measure(a_2)$. Indeed,
  \begin{align*}
      \measure(a_2\mid a_1) = \sum_{a_3,\ldots,a_n}\measure(a_2\mid a_{1},a_{3:n})\cdot \measure(a_{3:n}\mid a_1) 
      =\sum_{a_3,\ldots,a_n}\measure(a_2)\cdot \measure(a_{3:n}\mid a_1) 
      = \measure(a_2)
  \end{align*}
  In the same way it can be shown that $\measure(a_3)=\measure(a_3\mid a_{1:2})$ and so on.

\end{proof}

\subsection{Proof of Theorem~\ref{thm:dp-high-prob-bound}}

\begin{proof}[Proof of Theorem ~\ref{thm:dp-high-prob-bound}]
  Fix a measure $\measure$ on $\X^n$ with Gibbs-dependence $\psi_n$, and fix an $(\varepsilon,\delta)$-differentially private algorithm that takes a sample $S\in\X^n$ and returns $k$ predicates $h_1,\dots,h_k:\X\rightarrow\{0,1\}$. Assume towards contradiction that
  \begin{equation}
    \label{eq:3}
    \Pr_{S,\alg(S)}\left[
      \max_{i\in [k]}\abs{h_i(\measure)-h_i(S)}\geq 10\varepsilon + 2\mixing
    \right]
    \geq \frac{\delta}{\varepsilon}.
  \end{equation}

\begin{algorithm}[tb]
   \caption{Auxiliary Algorithm $\alg'$}
    \label{alg:contra-alg}
\begin{algorithmic}
   \STATE {\bfseries Input:} $\vec{S}=(S_1,\ldots,S_T)$, where $T=\frac{\varepsilon}{\delta}$.

\STATE $F \leftarrow \emptyset$ 
   \FOR{$t \in [T]$}
   \STATE $(h^t_1,\ldots,h^t_k) \leftarrow \alg(S_t)$
   \STATE $H_t \leftarrow \{(h^t_1,t),\ldots,(h^t_k,t)\}$
   \STATE $\bar{H_t} \leftarrow \{1-h\mid h\in H_t\}$
   \STATE $F \leftarrow F \cup H_t \cup \bar{H_t}$
   \ENDFOR
   \STATE Sample $(h^*,t^*)$ from $F$ using the exponential mechanism. Specifically, sample $(h^*,t^*)\in F$ with probability proportional to $\exp\big(\frac{\varepsilon n}{2}\left(h^*(S_{t^*})-h^*(\measure)\right)\big)$.
   \STATE Return $(h^*,t^*)$
\end{algorithmic}
\end{algorithm}

  Consider the procedure
  described in Algorithm~\ref{alg:contra-alg}. As differential private algorithms are immune to post-processing
  and by the composition theorem,
  $\alg'$ is by itself $(2\varepsilon, \delta)$-differentially private. 
  Given a multi-set $\vec{S}$ sampled from $\measure^T$,
  by \eqref{eq:3} we get that 
  \[\forall t: \Pr_{S_t,\alg(S_t)}\left[
      \max_{i\in [k]}\abs{h^t_i(\measure)-h^t_i(S_t)}\geq 10\varepsilon + 2\mixing
    \right] \geq \frac{\delta}{\varepsilon},\]
  and hence, by setting $T=\frac{\varepsilon}{\delta}$, we have that 
  \begin{align*}
    \Pr_{\vec{S},\alg'(\vec{S})} & 
    \left[
      \max_{t\in [T],i\in [k]}\abs{h^t_i(\measure)-h^t_i(S_t)} 
      \geq 10\varepsilon + 2\mixing
    \right] 
    \geq 
    1 - \left(1-\frac{\delta}{\varepsilon}\right)^T\geq \uf{2}.
  \end{align*}
  By Markov's inequality,
  \[\E_{\vec{S},\alg'(\vec{S})}\left[
      \max_{t\in [T],i\in [k]}\abs{h^t_i(\measure)-h^t_i(S_t)}
    \right]
    \geq 5\varepsilon + \mixing.\]
    Now
  the set constructed in the algorithm's run, $F$,  contains also the negation of each predicate,
  and hence
  \begin{align*}
    &\E_{\vec{S},\alg'(\vec{S})}
    \left[
      \max_{(h,t)\in F}\big\{h(S_t) - h(\measure)\big\}
    \right] 
    =\E_{\vec{S},\alg'(\vec{S})}\left[
      \max_{t\in [T],i\in [k]}\abs{h^t_i(\measure)-h^t_i(S_t)}
    \right] 
    \geq 5\varepsilon + \mixing      .
  \end{align*}

By the properties of the exponential mechanism
  (see \citet{mcsherry2007mechanism} or \citet{bassily2016algorithmic}), denoting the output of the algorithm by $(h^*,t^*)$ we get that
  \begin{align*}
    &\E_{(h^*,t^*)}
    \left[
    h^*(S_{t^*}) - h^*(\measure)
    \right] 
    \geq
    \max_{(h,t)\in F} \{ h^*(S_{t^*}) - h^*(\measure) \}
    - \frac{2}{\varepsilon n}\log(2Tk).
  \end{align*}
  Taking expectation on both sides yields
  \begin{align*}
    &\E_{\vec{S},\alg'(\vec{S})} 
    \left[
    h^*(S_{t^*}) - h^*(\measure)
    \right] 
    \geq
    \E_{\vec{S},\alg'(\vec{S})}\left[
    \max_{(h,t)\in F} \{ h^*(S_{t^*}) - h^*(\measure) \}
    \right]
    - \frac{2}{\varepsilon n}\log(2Tk) 
    \geq
    5\varepsilon + \mixing - \frac{2}{\varepsilon n}\log(2k\varepsilon/\delta).
  \end{align*}
For $n \geq \frac{\log(2k\varepsilon/\delta)}{\varepsilon^2}$,
  this is at least $2\varepsilon + \mixing$ which contradicts Lemma~\ref{lem:dp-expect-bound}.
\end{proof}
\end{document}